%% file: paper.tex
\documentclass[11pt]{article}
\usepackage[preprint]{acl}

\usepackage{times}
\usepackage{latexsym}
\usepackage[T1]{fontenc}
\usepackage[utf8]{inputenc}
\usepackage{microtype}
\usepackage{inconsolata}
\usepackage{graphicx}
\usepackage{amsmath,amssymb,amsthm}
\usepackage{mathtools}
\usepackage{bbm}
\usepackage{booktabs}
\usepackage{multirow}
\usepackage{longtable}

\newtheorem{theorem}{Theorem}
\newtheorem{lemma}{Lemma}
\newtheorem{corollary}{Corollary}
\theoremstyle{remark}
\newtheorem{remark}{Remark}

\newcommand{\Prob}{\mathbb{P}}
\newcommand{\E}{\mathbb{E}}
\newcommand{\Var}{\operatorname{Var}}
\newcommand{\Cov}{\operatorname{Cov}}
\newcommand{\1}{\mathbbm{1}}
\newcommand{\ns}{\textsuperscript{\textdagger}}

\newcommand{\Figure}{Fig.}
\newcommand{\Table}{Tab.}
\newcommand{\Tables}{Tabs.}
\newcommand{\Section}{Sec.}
\newcommand{\Appendix}{App.}
\newcommand{\Theorem}{Thm.}
\newcommand{\Corollary}{Cor.}
\newcommand{\Lemma}{Lem.}

\graphicspath{{figures/}}

\usepackage{xspace}
\newif\ifshowcomments
\showcommentstrue

\makeatletter
\let\todo\@undefined
\makeatother

\ifshowcomments
    \usepackage[textsize=scriptsize,textwidth=2.4cm]{todonotes}
\else
    \usepackage[disable,textsize=scriptsize,textwidth=2.4cm]{todonotes}
\fi

\title{Averaging Bias: Human Faithfulness Annotations are not Locally Faithful}

\author{
  Huajian Zhang\thanks{\ Equal contribution.}\qquad\qquad
  Yiyang Feng\footnotemark[1]\qquad\qquad
  Jiawei Zhou \\
  \texttt{\{huajzhang, yiyfeng, jiawei.zhou.1\}@cs.stonybrook.edu} \\
  Stony Brook University
}

\begin{document}
\maketitle

\begin{abstract}
Evaluation of faithfulness of text summarization
treats a model generated summary as faithful only if \emph{every} of its sentences is supported by the source document: a strict \textit{conjunctive rule} under which a single unsupported sentence makes the whole summary unfaithful. Yet most faithfulness benchmarks collect only one global human annotation label per summary.
We ask whether such global human labels actually implement the conjunctive rule. We hypothesize that 
annotators may accept a summary as faithful when \textit{most} sentences are faithful, not only when \textit{all} are faithful.
To test our hypothesis, we use five large language model (LLM) judges as per-sentence raters across four widely used faithfulness benchmarks. We find that global human labels correlate better with the average of per-sentence LLM judgments than with the implementation of the strict conjunctive rule.
A manual review confirms that a substantial fraction of summaries labeled faithful by humans contain genuine local factual errors. We call this tendency \emph{Averaging Bias}.
Our results reveal that human labels on widely used faithfulness benchmarks contain measurable Averaging Bias, calling for carefully structured designs for trustworthy human annotations.\footnote{Code and data available at \url{https://github.com/Zesearch/human-annotation-averging-bias}.
}

\end{abstract}

\section{Introduction}

\begin{figure}[t]
  \centering
  \includegraphics[width=\columnwidth]{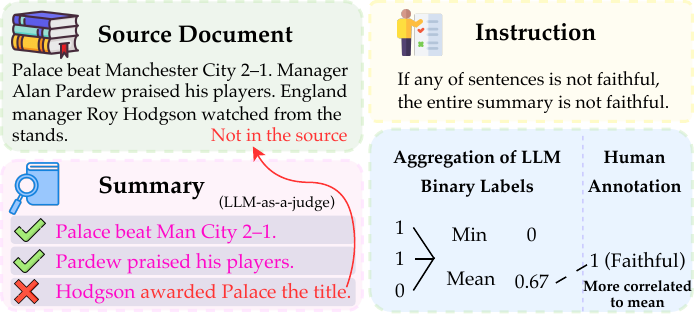}
  \caption{A 3-sentence summary with one local error.
  Faithfulness is labeled as a binary value (1 being faithful and 0 being not). Fine-grained sentence-wise LLM judge labels are aggregated by $\min$, which gives 0, and by $\mathrm{mean}$, which gives 0.67. The global human annotation label often gives 1 (\emph{Faithful}) based on the overall impression on the whole summary.
  This systematic gap is
  \emph{Averaging Bias}: it indicates that human faithfulness labels may
  not follow the strict conjunctive definition.}
  \label{fig:motivation}
\end{figure}


Faithfulness is central to evaluating document summarization: a generated summary should be factually supported by its source.
Prior work has built many benchmarks with binary faithfulness labels per
summary
\cite{wang-etal-2020-qags,fabbri-etal-2021-summeval,cao-wang-2021-cliff,goyal-durrett-2021-annotating,pagnoni-etal-2021-understanding,summac_2022,honovich-etal-2022-true-evaluating,tang-etal-2023-understanding,zhang-etal-2024-fine}.
The assumption behind them is
\emph{conjunctive}:
if each local unit (e.g., sentence) is labeled $1$ (faithful) or $0$ (unfaithful),
the global label
is the \emph{minimum} of local labels, so a single unsupported unit makes the whole generation unfaithful~\citep{wang-etal-2020-qags,polytope,fabbri-etal-2021-summeval,frank,summac_2022,honovich-etal-2022-true-evaluating,factcheckgpt2023,malaviya-etal-2024-expertqa,zhang-etal-2024-fine,tang-etal-2024-tofueval,reveal2024,STORYSUMM,FaithBench,Positional_Bias,ContextCheck,ReFACT}.



Since most benchmarks provide only a \emph{single global human annotation label} per summary~\citep{summac_2022,Protocols,honovich-etal-2022-true-evaluating,li-etal-2023-halueval,zhang-etal-2024-fine,llmaggrefact_dataset},
this implicitly assumes humans applied the conjunctive rule, but whether
they do has not been tested directly. 
We hypothesize that annotators often \textbf{do not} follow the conjunctive rule. Strict faithfulness annotation is cognitively demanding: an annotator must decompose a long generation and verify each local unit. But under load, humans may rely on heuristics rather than exhaustive verification to form a holistic impression, so the summaries that are only seemingly faithful still receive positive labels despite local errors~\cite{kahneman1973attention,tversky1974judgment,sweller1988cognitive,evans2008dual}.

To test our hypothesis, we use five open-weight LLM judges to produce per-sentence binary faithfulness judgments across four benchmarks. 
Empirically, we find that mean aggregation of per-sentence labels correlates more strongly with global human labels than minimum aggregation. 
Human validation supports the reliability of all five judges,
while a manual review also confirms that a substantial fraction of summaries labeled faithful by humans contain genuine local factual errors. We call this tendency \textbf{Averaging Bias}, i.e., human global labels behave more like the \emph{mean} of local judgments than the strict \emph{minimum} (\Figure~\ref{fig:motivation}). 
We further discuss the potential implications of this human annotation bias, including the unreliability of using global human labels for measuring faithfulness evaluators and for training models. 
\section{Related Work}

Many datasets collect human faithfulness labels for text summaries, including QAGS
\citep{wang-etal-2020-qags}, SummEval \citep{fabbri-etal-2021-summeval},
FRANK \citep{pagnoni-etal-2021-understanding}, CLIFF
\citep{cao-wang-2021-cliff}, and fine-grained span annotations
\citep{goyal-durrett-2021-annotating}. Meta-evaluation benchmarks such as
TRUE, AggreFact, DiverSumm, HaluEval, and LLM-AggreFact standardize or
aggregate these resources for evaluating automatic faithfulness scorers
\citep{honovich-etal-2022-true-evaluating,tang-etal-2023-understanding,li-etal-2023-halueval,zhang-etal-2024-fine,tang-etal-2024-minicheck}.
Each dataset's original annotation scheme and benchmark-level binary
conversion is in
\Table~\ref{tab:dataset_annotation_1} and \Table~\ref{tab:dataset_annotation_2}. These conversions collapse different annotations into a single summary-level binary label and apply a strict conjunctive rule.

A separate line of work evaluates faithfulness by decomposing a generation
into smaller units and checking each against the source: QAGS
\citep{wang-etal-2020-qags} compares answers to questions generated from
the summary and the document, NLI-style methods judge document--sentence
consistency \citep{kryscinski-etal-2020-factcc,summac_2022}, FActScore
\citep{factscore} verifies atomic facts in biographies against external
evidence, and retrieval-augmented evaluators judge individual claims before
producing an overall score \citep{RAGAs,Long_form}. 
These approaches establish a useful local view of faithfulness, where a generation is decomposed into smaller units and each unit is checked against the source. 
We build on this idea, but our goal is to use sentence-level judgments as local evidence to study how human global labels behave. We test whether they follow the strict conjunctive rule implied by faithfulness definitions.
\section{Are Global Human Faithfulness Labels Truly Faithful?}
\label{sec:method}

\subsection{Problem Setup}
\label{sec:method:setup}

We are given a source document, a generated summary, and a single human global faithfulness label, but no per-sentence human annotation. 
Since benchmarks did not contain sentence-level labels,
we split the summary into sentences and obtain a per-sentence faithfulness score from an LLM judge: either a \emph{hard} label in $\{0,1\}$ or a \emph{soft} confidence in $[0,1]$, with $1$ meaning faithful.

We then compare two ways of turning these per-sentence scores into a summary-level prediction. \emph{Minimum aggregation} implements the strict conjunctive rule: the summary is faithful only if \emph{every} sentence is (a single $0$ forces the minimum to $0$). \emph{Mean aggregation} gives partial credit, calling the summary faithful when \emph{most} sentences are (mostly $1$s with a few $0$s still average high). Our central question is whether the human global label behaves more like the minimum or more like the mean.


\subsection{A Correlation-Based Diagnostic}
\label{sec:method:diagnostic}

Our diagnostic compares the Pearson correlation between the human global label and each of the two aggregations. Assume the LLM judge's per-sentence labels are close to ground truth, with per-sentence error rate $\epsilon$ below a dataset-dependent threshold $\epsilon_0$. Under this assumption, we prove that if the human global label's Pearson correlation with mean aggregation is no less than its correlation with minimum aggregation, then human annotators are biased toward averaging (\Appendix~\ref{app:proof}). We call this bias \emph{Averaging Bias}.

The diagnostic therefore needs two ingredients: (i) the two correlations, computed from human labels and LLM judge scores, and (ii) a check that the LLM judge's per-sentence error rate $\epsilon$ falls below dataset-dependent threshold $\epsilon_0$. We validate (ii) through manual sentence-level annotation.

\subsection{Manual Review Protocol} 
\label{sec:method:protocol}

For the second ingredient of our diagnostic,
we conduct a manual review to estimate
the reliability of the LLM judges at the local level. This validation is necessary because the observed correlation pattern could otherwise have a simpler explanation: 
minimum aggregation is more sensitive to local judge
errors than mean aggregation. 
A spurious local unfaithfulness prediction can
disproportionately affect the minimum aggregation and reduce its correlation with the human global label. 
Thus, before interpreting mean $\geq$ min as evidence of Averaging Bias, we need to check that the LLM judges provide sufficiently reliable sentence-level judgments.
For these goals, we conduct two distinct manual reviews: \emph{(i)} a judge-reliability annotation that estimates each judge's
sentence-level error rate, and \emph{(ii)} a suspicious-example audit that
tests whether consensus disagreements correspond to genuine local factual
errors. We describe both studies in
\Section~\ref{sec:setup}; complete instructions and statistics are
provided in \Appendix~\ref{app:annotation}.

\section{Experimental Setup}
\label{sec:setup}

\paragraph{Judges.} Five open-weight instruction models span families and sizes: Qwen3-4B/8B/32B non-thinking \cite{qwen3}, Llama-3.1-8B-Instruct \cite{grattafiori2024llama}, and OLMo-3-7B \cite{olmo2025olmo3}. Each is prompted to output \texttt{\{"faithful": 0/1\}} per sentence (\Appendix~\ref{tab:sentence-faithfulness-prompt}); the hard label is the parsed digit, the soft label is the normalized probability of generating \texttt{1} versus \texttt{0} at the value position.

\paragraph{Datasets.} Four faithfulness benchmarks with global human labels (\Table~\ref{tab:data}): AggreFact-CNN and AggreFact-Other (CNN/DM and non-CNN portions of LLM-AggreFact, the latter filtered to $\ge2$ sentences) \citep{tang-etal-2023-understanding,tang-etal-2024-minicheck}, DiverSumm \citep{zhang-etal-2024-fine}, and the summarization portion of HaluEval \cite{li-etal-2023-halueval}. Single-sentence examples, on which all aggregators coincide, are excluded; per-subset statistics are in \Table~\ref{tab:dataset-subsets}.

\begin{table}[t]
\centering
\resizebox{\columnwidth}{!}{%
\begin{tabular}{lrc}
\hline
Benchmark & Examples & Sents/summary ($\mu\pm\sigma$) \\
\hline
AggreFact-CNN         & 1{,}017  & 3.26 $\pm$ 0.83 \\
AggreFact-Other & 644      & 2.14 $\pm$ 0.67 \\
DiverSumm             & 563      & 8.48 $\pm$ 5.17 \\
HaluEval (summ.)      & 20{,}000 & 3.59 $\pm$ 1.31 \\
\hline
\end{tabular}%
}
\caption{Benchmarks used. Each carries a binary global human faithfulness
label. The last column reports the mean and standard deviation of the
number of sentences per summary, obtained by spaCy sentence splitting over
the entire benchmark.}
\label{tab:data}
\end{table}

\paragraph{Metrics.} We report Pearson correlation $\rho$ between each
aggregated score and the human global label, with 95\% bootstrap confidence
intervals. We focus on Pearson;
Spearman, Kendall, and ROC-AUC are in
\Appendix~\ref{app:full}.

\section{LLM Judge Reliability}
\label{sec:reliability}

\paragraph{Human inspection setup}
We randomly sample 25 summaries from each benchmark, totaling 100 unique
summaries and 492 sentences. For every sampled summary, the annotator labels each sentence as \emph{Faithful}, \emph{Unfaithful}, or \emph{Unclear/fragment} according to whether it is supported by the source document. Model predictions and benchmark-level labels are hidden throughout the annotation. The result is in Figure~\ref{fig:eps-vs-eps0}.

\paragraph{Validating LLM judges}
The diagnostic is valid when the judge's per-sentence error rate $\epsilon$ falls below a benchmark-specific threshold $\epsilon_0$. Therefore, we estimate $\hat\epsilon$ (judge-annotator disagreement) from manual sentence annotations and compute $\epsilon_0$ in closed form (\Appendix~\ref{app:eps0-math},~\ref{app:annotation}). On AggreFact-CNN and AggreFact-Other, $\epsilon_0 = 0.175$ and $0.322$; on DiverSumm and HaluEval the diagnostic is \emph{unconditional}, i.e., valid for any $\epsilon \in (0, \tfrac{1}{2})$. \Figure~\ref{fig:eps-vs-eps0} shows every $\hat\epsilon$ point estimate and its 95\% bootstrap CI strictly below $\epsilon_0$: Qwen3-32B is most accurate ($\hat\epsilon \in [0.100, 0.143]$), while the most error-prone judge (Llama-3.1-8B, up to $0.286$ on AggreFact-Other) still clears its threshold. The conclusions in \Table~\ref{tab:meta-eval-main} therefore hold throughout.

\section{Results}
\label{sec:results}

\begin{table*}[t]
\centering\small
\setlength{\tabcolsep}{5pt}
\begin{tabular}{l cc cc cc cc}
\toprule
Model & \multicolumn{2}{c}{\textbf{AggreFact-CNN}} & \multicolumn{2}{c}{\textbf{AggreFact-Other}} & \multicolumn{2}{c}{\textbf{DiverSumm}} & \multicolumn{2}{c}{\textbf{HaluEval}} \\
\cmidrule(lr){2-3} \cmidrule(lr){4-5} \cmidrule(lr){6-7} \cmidrule(lr){8-9}
 & $\min$ & $\mathrm{mean}$ & $\min$ & $\mathrm{mean}$ & $\min$ & $\mathrm{mean}$ & $\min$ & $\mathrm{mean}$ \\
\midrule
Qwen3-4B & $0.22_{\pm 0.10}$ & $\mathbf{0.24}_{\pm 0.09}$ & $0.07_{\pm 0.07}$ & $\mathbf{0.18}_{\pm 0.08}$ & $\mathbf{0.46}_{\pm 0.08}$ & $0.41_{\pm 0.06}$ & $0.26_{\pm 0.01}$ & $\mathbf{0.31}_{\pm 0.01}$ \\
Qwen3-8B & $0.27_{\pm 0.09}$ & $0.27_{\pm 0.09}$ & $0.11_{\pm 0.07}$ & $\mathbf{0.21}_{\pm 0.08}$ & $0.38_{\pm 0.09}$ & $\mathbf{0.41}_{\pm 0.06}$ & $0.31_{\pm 0.01}$ & $\mathbf{0.40}_{\pm 0.01}$ \\
Qwen3-32B & $0.39_{\pm 0.09}$ & $\mathbf{0.41}_{\pm 0.08}$ & $0.19_{\pm 0.06}$ & $\mathbf{0.24}_{\pm 0.07}$ & $0.40_{\pm 0.09}$ & $\mathbf{0.41}_{\pm 0.06}$ & $0.39_{\pm 0.01}$ & $\mathbf{0.47}_{\pm 0.01}$ \\
Llama-3.1-8B & $\mathbf{0.25}_{\pm 0.08}$ & $0.23_{\pm 0.09}$ & $0.14_{\pm 0.07}$ & $\mathbf{0.17}_{\pm 0.08}$ & $\mathbf{0.33}_{\pm 0.07}$ & $0.28_{\pm 0.06}$ & $0.31_{\pm 0.01}$ & $\mathbf{0.40}_{\pm 0.01}$ \\
OLMo-3-7B & $0.20_{\pm 0.07}$ & $\mathbf{0.21}_{\pm 0.08}$ & $0.15_{\pm 0.06}$ & $\mathbf{0.18}_{\pm 0.07}$ & $0.15_{\pm 0.09}$ & $\mathbf{0.29}_{\pm 0.08}$ & $0.14_{\pm 0.01}$ & $\mathbf{0.26}_{\pm 0.01}$ \\
\bottomrule
\end{tabular}
\caption{Pearson correlation (hard predictions) between aggregated sentence-level judgments and the human global label, per (judge, benchmark). Cells show $\rho \pm \delta$ with $\delta$ the half-width of the 95\% bootstrap CI; the higher of $\min$ and $\mathrm{mean}$ per pair is in bold. $\mathrm{mean}$ exceeds $\min$ for all five judges on AggreFact-Other and HaluEval (Averaging Bias by \Section~\ref{sec:method:diagnostic}); on AggreFact-CNN and DiverSumm $\mathrm{mean}$ wins for three of five judges.}
\label{tab:meta-eval-main}
\end{table*}

\begin{figure}[t]
  \centering
  \includegraphics[width=\columnwidth]{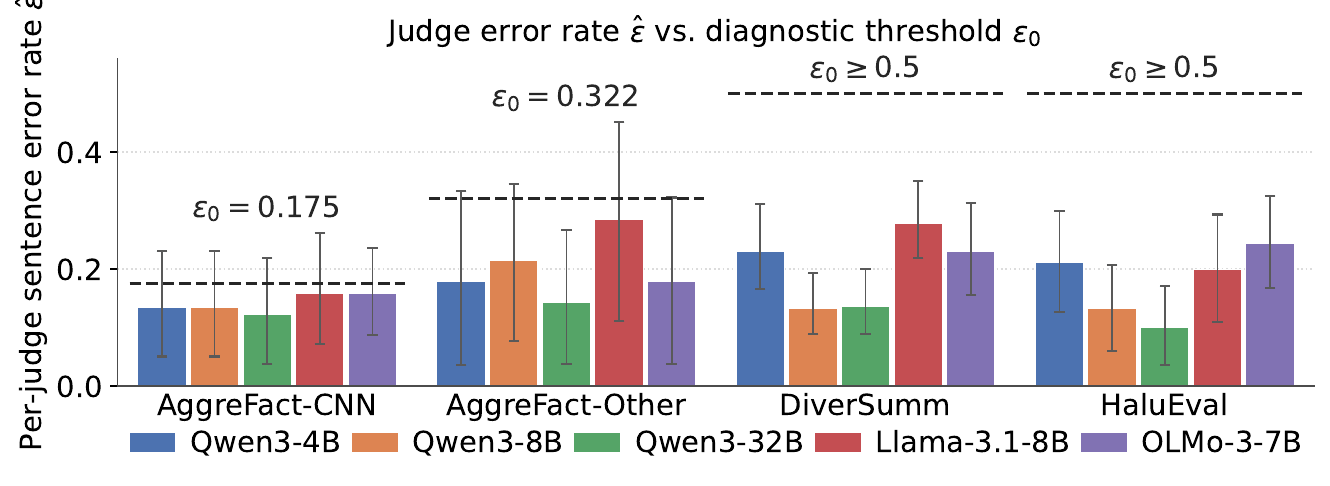}
  \caption{Per-judge sentence-level error rate $\hat\epsilon$ (bars; 95\% bootstrap CI) versus the \Section~\ref{sec:method:diagnostic} threshold $\epsilon_0$ (dashed), from 25 randomly sampled summaries per benchmark. Every $\hat\epsilon$ satisfies $\hat\epsilon<\epsilon_0$.\protect\footnotemark{} Per-(judge, benchmark) CIs in \Table~\ref{tab:eps-long}.}
  \label{fig:eps-vs-eps0}
\end{figure}
\footnotetext{On DiverSumm and HaluEval the noise condition holds \emph{unconditionally}: $\rho(Y,A)>\rho(Y,B)$ for all $\epsilon\in(0,\tfrac12)$, so no finite threshold exists in that range.}

\paragraph{Mean aggregation matches human labels at least as well as minimum.}
\Table~\ref{tab:meta-eval-main} reports the main comparison. On AggreFact-Other and HaluEval, $\mathrm{mean}$ exceeds $\min$ for \emph{all five} judges, with large gaps of up to $0.11$ on AggreFact-Other and $0.12$ on HaluEval. Besides, the HaluEval bootstrap CIs ($\pm 0.01$) make the per-judge gaps statistically clear. Both benchmarks therefore give uniform evidence of Averaging Bias. The signal is partial on AggreFact-CNN and DiverSumm, where $\mathrm{mean}$ wins for three of five judges. Across all four benchmarks, Qwen3-32B (the largest judge) and OLMo-3-7B show $\mathrm{mean} \ge \min$ uniformly, while the smaller Qwen3 models and Llama-3.1-8B occasionally flip on the benchmarks where the signal is weaker. More results can be found in the \Appendix~\ref{app:full}.

\paragraph{Averaging Bias can be significant even with small correlation gaps.}
\Table~\ref{tab:meta-eval-main} indicates strong Averaging Bias on AggreFact-Other and HaluEval, but the gaps between the $\mathrm{mean}$ and $\min$ correlations are small on AggreFact-CNN and DiverSumm, where. We conduct a simulation showing that Averaging Bias can still be significant with a small gap. Specifically, we sample sentence-level faithfulness labels according to the ratio of faithful sentences in our manual annotation, and simulate how humans aggregate them into a summary-level label. A fraction $y$ of summaries applies the averaging rule (faithful when at least half of the sentences are faithful), and the remaining $1-y$ apply the minimum rule (faithful only when every sentence is faithful). We then compute the $\mathrm{mean}$ and $\min$ correlations against these simulated labels, for a perfect judge and for a judge with the measured error rate $\bar\epsilon$ (the judge average of $\hat\epsilon$). \Figure~\ref{fig:oracle-mixture} shows that a small gap can hide substantial bias even under the perfect judge. On AggreFact-CNN, the gap is only $+0.03$ when $60\%$ of the labels are averaging-biased, and on DiverSumm it is $0.00$ when $40\%$ are. Therefore, the small gaps for these two benchmarks in \Table~\ref{tab:meta-eval-main} do not certify unbiased labels.

\begin{figure}[t]
  \centering
  \includegraphics[width=0.98\columnwidth]{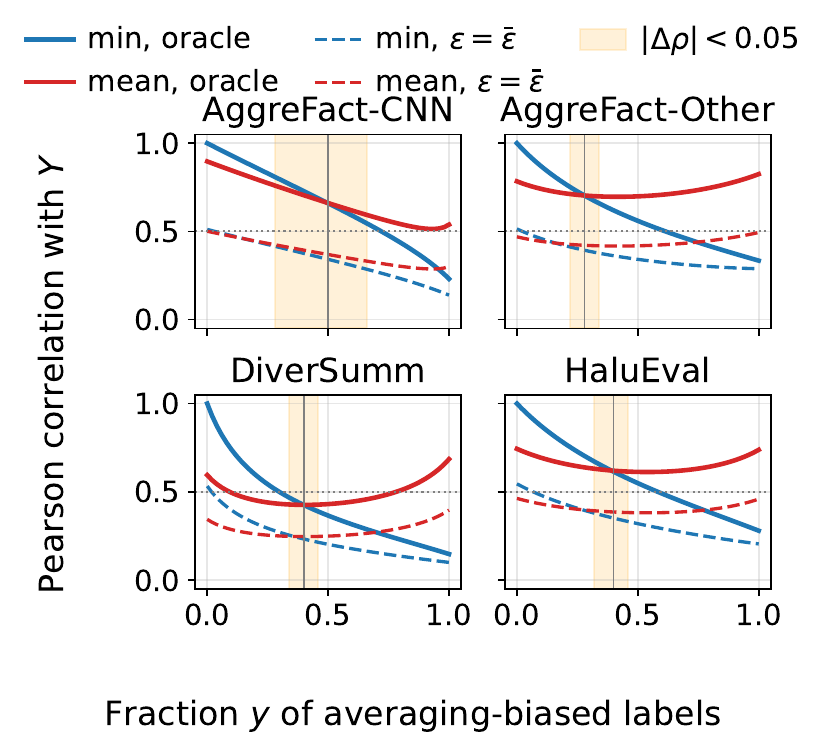}
  \caption{Pearson correlation between the simulated global label and each aggregation, as the fraction $y$ of averaging-biased labels varies. Solid curves use a perfect judge and dashed curves use the measured judge error $\bar\epsilon$. The vertical line marks where the two solid curves cross, and the shaded band marks small correlation gaps $|\Delta\rho|<0.05$ for the perfect judge.}
  \label{fig:oracle-mixture}
\end{figure}


\paragraph{Manual review confirms genuine human-label errors.}
We define a summary as \emph{suspicious} when its human label is faithful but all five LLM judges flag at least one sentence as unfaithful. For each benchmark, an annotator reviews up to 25 such summaries and re-labels each flagged sentence against the source document, using the same \emph{Faithful}, \emph{Unfaithful}, and \emph{Unclear} options. \Figure~\ref{fig:pies} reports both granularities. The fraction of human-faithful summaries that are \emph{suspicious} (left) is 2\%, 40\%, 7\% and 14\% on AggreFact-CNN, AggreFact-Other, DiverSumm and HaluEval. This mirrors \Table~\ref{tab:meta-eval-main}: the two benchmarks with the weaker $\mathrm{mean} \ge \min$ signal (AggreFact-CNN and DiverSumm) are also the ones with the fewest suspicious summaries, showing that the dataset-level diagnostic reflects how much bias is actually present. On up to 25 such summaries per benchmark, manual sentence-level review (right) confirms 77\% (10/13), 79\% (22/28), 50\% (5/10) and 89\% (24/27) of the flagged sentences as genuinely \emph{unfaithful}. These high rates ($\ge 50\%$) show that the per-sentence flags are themselves valid, so the diagnostic is also useful at the example level for locating specific mislabeled summaries.

\paragraph{Error types.} 
We provide representative error cases per benchmark in \Table~\ref{tab:manual-error-cases}.
Confirmed errors recur in several forms, including unsupported or hallucinated content, entity/attribute substitution, relation or event distortion, overstatement, and attribution errors. Boilerplate or navigation leakage is also common, especially in HaluEval, whose CNN/DailyMail references are known to contain such artefacts \citep{fabbri-etal-2021-summeval,guo-etal-2022-questioning}.
Note that fragments from sentence splitting are not counted as label errors.

\paragraph{Extension to subclaim granularity.}
Our primary analysis treats sentences as local units. To examine whether the
same pattern occurs within sentences, we repeat the analysis using human
sentence-level labels and LLM judgments over atomic subclaims on TofuEval and
MSumBench. The results shows mean aggregation exceeds minimum aggregation for every judge--dataset pair. A manual audit also confirms that some human-faithful sentences contain unsupported subclaims. Full setup and results are provided
in \Appendix~\ref{app:subclaim}.

\begin{figure}[t]
  \centering
  \includegraphics[width=0.995\columnwidth]{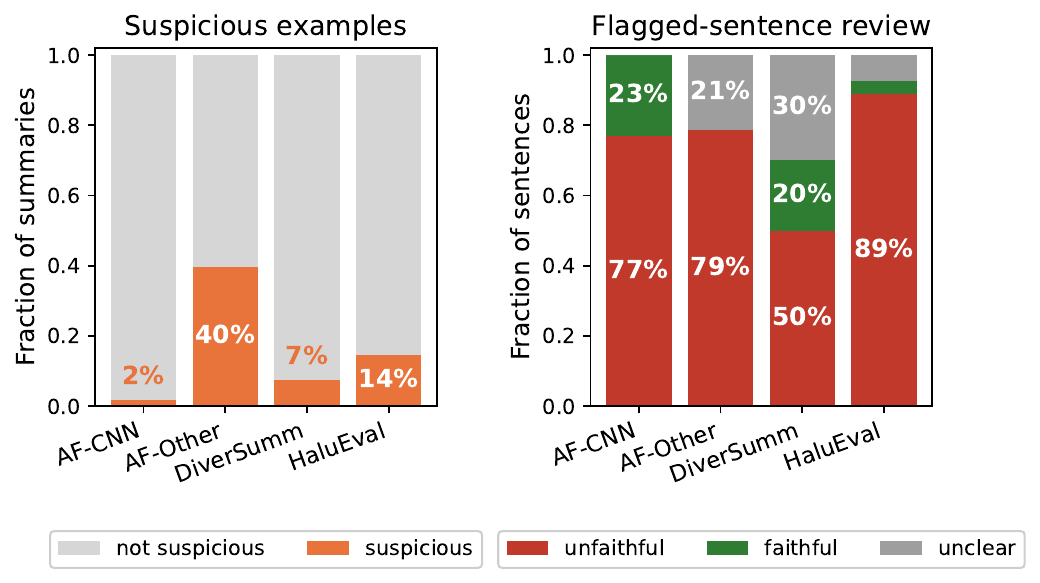}
  \caption{(Left, example-level) fraction of human-faithful summaries that are \emph{suspicious}: all five LLM judges flag at least one sentence. (Right, sentence-level) manual verdicts on the all-five-flagged sentences inside up to 25 suspicious summaries per benchmark: \emph{unfaithful} confirms an averaging-bias error, \emph{faithful} means the judges were too strict, \emph{unclear} otherwise.}
  \label{fig:pies}
\end{figure}

\section{Discussion}
\label{sec:discussion}

\paragraph{Higher agreement with human labels $\ne$ better faithfulness evaluator.}
Faithfulness evaluators are routinely ranked by agreement with the human global label. Our results suggest this conflates two qualities: when the labels themselves average over local judgments rather than apply the strict rule, a metric also tops the ranking by averaging, not by being a stricter checker. The agreement number measures alignment with human behavior, not adherence to the conjunctive definition. 


\paragraph{Implications for training.} Many training pipelines
use human-labeled data as
supervision; if global labels reward mostly-correct long responses despite
local errors, models trained on them may inherit the bias. 

\section{Conclusion}



We introduce \emph{Averaging Bias}: human global faithfulness labels behave more like the \emph{mean} of per-sentence judgments than \emph{minimum}. We prove a correlation-based diagnostic and apply it across four benchmarks and five LLM judges; $\mathrm{mean}$ exceeds $\min$ for all five judges on two benchmarks, and for three of five judges on the other two benchmarks.
Manual review confirms genuine local errors hidden by positive human labels. 


\section*{Limitations}

Our diagnostic uses LLM judges as a proxy for local human annotation, which we do not have at scale. Stronger proprietary models or larger open-weight models may provide more reliable sentence-level judgments.
The manual review covers 20 cases per benchmark and is a diagnostic rather than the primary evidence. 
Sentence splitting can produce malformed units (the ``fragment'' category), and results depend on the judge prompt and on binary label normalization across benchmarks with different annotation procedures.

We use sentences as natural local semantic units. This choice is simple
and broadly applicable, but it is not the finest possible granularity. Many
faithfulness evaluators decompose generations into smaller units, such as subclaims, atomic facts, or fine-grained factual propositions
\citep{alignscore,factscore,tang-etal-2024-minicheck,mishra2024finegrained,VeriScore,DiscInfer,Discourse,factcg}. Recent work on long-document factual
consistency evaluation further highlights the importance of fine-grained evaluation, showing that continuous local assessments can be more informative than summary-level binary labels \citep{Tale}.
Using such finer-grained units could reveal local errors that are hidden inside
a single sentence and may provide a more precise test of local-to-global
faithfulness aggregation.

\bibliography{references}

\appendix

\section{Details of Correlation-Based Diagnostic}
\label{app:proof}

\subsection{Setup}

Fix integers $M \geq 2$ and let $p \in (0,1)$. Let $S_1, \ldots, S_M$ be
i.i.d.\ Bernoulli$(p)$ random variables (the latent sentence-level faithfulness
labels). Let $\hat S_1, \ldots, \hat S_M \in [0,1]$ be the model's per-sentence
predictions, conditionally independent given $S$, with the conditional
distribution of $\hat S_i$ depending only on $S_i$. We treat two prediction
models in parallel.

\paragraph{Hard predictions.}
$\hat S_i \in \{0,1\}$ with $\hat S_i = S_i \oplus E_i$, where
$E_1, \ldots, E_M$ are i.i.d.\ Bernoulli$(\epsilon)$ for $\epsilon \in [0,1/2)$,
independent of $S$, and $\oplus$ is XOR. Equivalently,
$\Prob(\hat S_i = 1 \mid S_i = 1) = 1 - \epsilon$ and
$\Prob(\hat S_i = 1 \mid S_i = 0) = \epsilon$.

\paragraph{Soft predictions.}
$\hat S_i \in [0,1]$ derived from per-sentence logits as
$\hat S_i = \exp(\ell_{i,1}) / (\exp(\ell_{i,0}) + \exp(\ell_{i,1}))$, with
conditional first moments
\[
\begin{aligned}
  \E[\hat S_i \mid S_i = 1] &= 1 - \epsilon, \\
  \E[\hat S_i \mid S_i = 0] &= \epsilon, \\
  \epsilon &\in [0,1/2).
\end{aligned}
\]
We further assume the regularity condition that the conditional second moments
$\E[\hat S_i^2 \mid S_i]$ are continuous in $\epsilon$ at $\epsilon = 0$, which
is equivalent to $\hat S_i \to S_i$ in $L^2$ as $\epsilon \to 0$ and holds for
any softmax-of-logits family in which the logit gap concentrates on the correct
sign as $\epsilon \to 0$. Note that at $\epsilon = 0$ we have
$\E[\hat S_i \mid S_i = 1] = 1$ and $\E[\hat S_i \mid S_i = 0] = 0$, so since
$\hat S_i \in [0,1]$ we automatically have $\hat S_i = S_i$ almost surely; the
soft model collapses to the noiseless hard model in this limit.

\paragraph{Aggregators and labels.}
Set
\[
  K = \sum_{i=1}^M S_i, \qquad \hat K = \sum_{i=1}^M \hat S_i.
\]
The two aggregators we compare are
\[
  A = \min_i \hat S_i, \qquad B = \frac{1}{M}\sum_{i=1}^M \hat S_i = \frac{\hat K}{M}.
\]
In the hard case, $A = \1[\hat K = M]$, but this identity does not hold in the
soft case in general. The human label $Y = h(K)$ is determined by a function
$h: \{0,\ldots,M\} \to \{0,1\}$. We treat two cases:
\begin{itemize}
  \item \emph{Unbiased rater (Claim 1):} $h(K) = \1[K = M]$.
  \item \emph{Biased rater (Claim 2):} $h(K) = \1[K \geq j_0]$ for an integer
  $j_0$ with $1 \leq j_0 \leq M-1$.
\end{itemize}
For a random variable $T$, write
$\rho(Y, T) := \Cov(Y, T) / \sqrt{\Var(Y)\Var(T)}$ for Pearson's correlation.

\subsection{Reduction to Functions of $K$}

\begin{lemma}\label{lem:cond-exp}
For each $k \in \{0,\ldots,M\}$:
\begin{enumerate}
  \item In both prediction models,
  \[
    \E[B \mid K = k] = \epsilon + \frac{1-2\epsilon}{M}\, k.
  \]
  \item In the hard prediction model,
  \[
    \E[A \mid K = k] = (1-\epsilon)^k \epsilon^{M-k}.
  \]
  \item In the soft prediction model, with
  $G_s(t) := \Prob(\hat S_1 > t \mid S_1 = s)$,
  \[
    \E[A \mid K = k] = \int_0^1 G_1(t)^k\, G_0(t)^{M-k}\, dt.
  \]
\end{enumerate}
In all cases, $\E[A \mid S]$ and $\E[B \mid S]$ depend on $S$ only through $K$.
\end{lemma}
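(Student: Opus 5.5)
The plan is to condition on the full latent vector $S=(S_1,\ldots,S_M)$ and exploit the two structural assumptions: the $\hat S_i$ are conditionally independent given $S$, and the law of $\hat S_i$ depends only on $S_i$. By exchangeability, once $S$ is fixed exactly $K$ coordinates have $S_i=1$ and $M-K$ have $S_i=0$. Any symmetric functional of $(\hat S_1,\ldots,\hat S_M)$ therefore has a conditional expectation that depends on $S$ only through $K$. Both $\min$ and the mean are symmetric, which gives the final claim of \Lemma~\ref{lem:cond-exp} once the explicit formulas are in hand.

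\textbf{Part 1.} Linearity of expectation gives
\[
\E[B\mid S]=\frac1M\sum_i\E[\hat S_i\mid S_i].
\]
In both models $\E[\hat S_i\mid S_i]=\epsilon+(1-2\epsilon)S_i$. For the hard model this holds because $\Prob(\hat S_i=1\mid S_i)$ equals $1-\epsilon$ or $\epsilon$. For the soft model it is the stated first-moment assumption. Summing over $i$ yields $\epsilon+\frac{1-2\epsilon}{M}K$, which depends on $S$ only through $K$. Conditioning further on $K=k$ (tower property) gives the formula.

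\textbf{Part 2.} In the hard model $A=\1[\hat S_i=1\ \forall i]$. Conditional independence gives
\[
\Prob(A=1\mid S)=\prod_i \Prob(\hat S_i=1\mid S_i).
\]
This product has $K$ factors equal to $1-\epsilon$ and $M-K$ factors equal to $\epsilon$, so it equals $(1-\epsilon)^K\epsilon^{M-K}$.

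\textbf{Part 3.} This is the only step needing care. Since $A\in[0,1]$, I would use the layer-cake identity
\[
\E[A\mid S]=\int_0^1\Prob(A>t\mid S)\,dt.
\]
This requires justifying Tonelli for the nonnegative integrand, which is immediate. The event $\{\min_i\hat S_i>t\}$ is the intersection $\bigcap_i\{\hat S_i>t\}$, so conditional independence factors its probability as $\prod_i G_{S_i}(t)=G_1(t)^K G_0(t)^{M-K}$. Here the dependence of each conditional law only on $S_i$ ensures every factor is $G_1$ or $G_0$. Integrating over $t$ and then conditioning on $K=k$ gives the stated integral.

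The main obstacle is minor: using strict inequalities consistently, so that $\Prob(\min>t)$ factors exactly. With $\ge$ the product would be valid too, but the identity would then differ from the stated $G_s$ on a null set of $t$, which does not affect the integral.
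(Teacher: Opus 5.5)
Your proposal is correct and follows the paper's proof essentially step for step. For $B$ you use linearity with $\E[\hat S_i \mid S_i] = \epsilon + (1-2\epsilon)S_i$; for the hard minimum you use the product $\prod_i \Prob(\hat S_i = 1 \mid S_i)$; and for the soft minimum you use the layer-cake identity $\E[A \mid S] = \int_0^1 \Prob(A > t \mid S)\,dt$, with conditional independence factoring $\Prob(A > t \mid S)$. Your exchangeability argument for the ``depends only on $K$'' claim is a harmless addition, since the paper simply reads this off the explicit formulas. Both routes implicitly assume that the conditional law of $\hat S_i$ given $S_i = s$ is the same for every $i$, which is what lets each factor be $G_1$ or $G_0$.
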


\begin{proof}
For $B$, by linearity and the conditional means,
\[
  \E[\hat S_i \mid S_i] = (1-\epsilon)S_i + \epsilon(1-S_i) = \epsilon + (1-2\epsilon)S_i,
\]
which holds in both prediction models. Hence
$\E[\hat K \mid K] = \epsilon M + (1-2\epsilon)K$ and dividing by $M$ gives (1).

For $A$ in the hard case: conditional on $S$ the variables $\hat S_i$ are
independent with $\Prob(\hat S_i = 1 \mid S_i = 1) = 1-\epsilon$ and
$\Prob(\hat S_i = 1 \mid S_i = 0) = \epsilon$. Hence
\[
  \E[A \mid S] = \prod_{i=1}^M \Prob(\hat S_i = 1 \mid S_i),
\]
which equals $(1-\epsilon)^k \epsilon^{M-k}$ when $\sum S_i = k$, giving (2).

For $A$ in the soft case: by conditional independence of $\hat S_i$ given $S$,
\[
\begin{aligned}
  \Prob(A > t \mid S) &= \prod_{i=1}^M \Prob(\hat S_i > t \mid S_i) \\
  &= G_1(t)^k\, G_0(t)^{M-k}
\end{aligned}
\]
when $\sum S_i = k$. Since $\hat S_i \in [0,1]$,
$\E[A \mid S] = \int_0^1 \Prob(A > t \mid S)\, dt$, which gives (3). In all
cases the resulting expression is symmetric in $S$ and depends only on $K$.
\end{proof}

\begin{lemma}\label{lem:tower}
For any function $T$ of $\hat S$, with $\tilde T(K) := \E[T \mid K]$,
\[
  \Cov(Y, T) = \Cov\!\bigl(h(K),\, \tilde T(K)\bigr).
\]
\end{lemma}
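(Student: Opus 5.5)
The plan is to apply the law of total covariance, conditioning on $K$, and then check that the conditional-covariance term vanishes. Since $Y = h(K)$ is a function of $K$ alone, we have $\E[Y \mid K] = h(K)$ and $\Cov(Y, T \mid K) = 0$. The key identity is
\[
  \Cov(Y, T) = \E\bigl[\Cov(Y, T \mid K)\bigr] + \Cov\bigl(\E[Y \mid K],\, \E[T \mid K]\bigr),
\]
whose first term is then zero and whose second term equals $\Cov\bigl(h(K), \tilde T(K)\bigr)$, which is the claim.

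To keep the argument self-contained, I would verify this directly with the tower property rather than quoting the decomposition. First write $\Cov(Y,T) = \E[YT] - \E[Y]\E[T]$. Because $h(K)$ is $\sigma(K)$-measurable and bounded, we get $\E[YT] = \E\bigl[h(K)\,\E[T \mid K]\bigr] = \E[h(K)\tilde T(K)]$. Likewise $\E[T] = \E[\tilde T(K)]$, and $\E[Y] = \E[h(K)]$ holds trivially. Substituting these three expressions gives $\E[h(K)\tilde T(K)] - \E[h(K)]\,\E[\tilde T(K)]$, which is exactly $\Cov\bigl(h(K), \tilde T(K)\bigr)$.

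The only point needing care is integrability, so that the conditional expectations exist and the tower property applies. Every $T$ we actually use ($A$ or $B$, in either the hard or soft model) takes values in $[0,1]$, because each $\hat S_i \in [0,1]$. So I would state the lemma for bounded, or at least integrable, measurable $T$; then $hT$ is integrable and the manipulations above are valid. Lemma~\ref{lem:cond-exp} supplies explicit forms of $\tilde T$ for $T \in \{A, B\}$, but this lemma does not need them. Its role is to reduce both correlations $\rho(Y,A)$ and $\rho(Y,B)$ to covariances between functions of the Binomial$(M,p)$ variable $K$.

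No step here is a real obstacle. The only subtlety is to note that $\Var(T)$ in the denominator of $\rho$ is not reduced to a function of $K$, and the lemma makes no claim about it; only the numerator is.
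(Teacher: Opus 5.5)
Your proof is correct, and it takes a slightly different, more general route than the paper. The paper conditions on the full latent vector $S$ and writes $\E[YT]=\E\bigl[Y\,\E[T\mid S]\bigr]$. It then appeals to \Lemma~\ref{lem:cond-exp} to say that $\E[T\mid S]$ depends on $S$ only through $K$, and hence equals $\tilde T(K)$. You instead condition on $\sigma(K)$ directly and pull out the bounded $\sigma(K)$-measurable factor $h(K)$, so you need neither symmetry of $T$ nor \Lemma~\ref{lem:cond-exp}. Your argument therefore proves the lemma exactly as stated, for every integrable $T$; $T$ need not even be a function of $\hat S$. The paper's written justification covers only those $T$ for which \Lemma~\ref{lem:cond-exp} has been checked, namely $A$ and $B$, or more generally symmetric functions of $\hat S$ under the i.i.d.\ noise model. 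For a general $T$, the paper's step would need exactly your extra tower step $\E\bigl[Y\,\E[T\mid S]\bigr]=\E\bigl[Y\,\E[T\mid K]\bigr]$, which holds because $\sigma(K)\subseteq\sigma(S)$ and $Y$ is $\sigma(K)$-measurable. The paper's phrasing has one modest advantage: it stays in the conditioning-on-$S$ framework in which the noise model and the closed forms of \Lemma~\ref{lem:cond-exp} are derived. For the covariance identity itself, though, your version is both cleaner and more general. Your closing remark is also accurate: only the numerator of $\rho$ is reduced to a function of $K$, not $\Var(T)$.
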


\begin{proof}
Because $Y = h(S)$ is determined by $S$, and $h$ is symmetric so $h(S) = h(K)$,
conditioning on $S$ gives
\[
\begin{aligned}
  \E[Y T] &= \E\!\bigl[\E[Y T \mid S]\bigr] \\
  &= \E\!\bigl[Y\, \E[T \mid S]\bigr]
   = \E\!\bigl[h(K)\, \tilde T(K)\bigr],
\end{aligned}
\]
where the last equality uses that $\E[T \mid S]$ depends on $S$ only through
$K$ (\Lemma~\ref{lem:cond-exp}). Similarly $\E[T] = \E[\tilde T(K)]$. Subtracting
the products of expectations gives the claimed identity.
\end{proof}

\subsection{Claim of Unbiased Rater Under Pearson Correlation}

\begin{theorem}\label{thm:claim1}
At $\epsilon = 0$, with $h(K) = \1[K = M]$, in either the hard or the soft
prediction model,
\[
  \rho(Y, A) = 1 \;>\; \rho(Y, B) > 0.
\]
\end{theorem}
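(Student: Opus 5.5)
At $\epsilon = 0$ we first reduce both prediction models to the noiseless hard model. In the hard model $E_i \equiv 0$, so $\hat S_i = S_i$ deterministically. In the soft model the setup already observes that $\E[\hat S_i \mid S_i=1]=1$ and $\E[\hat S_i \mid S_i=0]=0$ together with $\hat S_i\in[0,1]$ force $\hat S_i = S_i$ almost surely. In both cases, therefore, $A = \min_i S_i = \1[K=M]$ and $B = K/M$ almost surely. Lemma~\ref{lem:cond-exp} gives the same result at $\epsilon=0$: part (2) reads $\E[A\mid K=k]=\1[k=M]$ with the convention $0^0=1$, and part (1) reads $\E[B\mid K]=K/M$. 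We could also route the covariances through Lemma~\ref{lem:tower}, but with exact identities this is unnecessary.

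For the first claim, $A = \1[K=M] = Y$ almost surely. Since $p\in(0,1)$, we have $\Prob(K=M)=p^M\in(0,1)$, so $\Var(Y)=p^M(1-p^M)>0$. Hence $\rho(Y,A)=\rho(Y,Y)=1$.

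For the second claim we compute $\rho(Y,B)$ explicitly, using $K\sim\mathrm{Bin}(M,p)$:
\[
\Cov(Y,K)=\E[K\1[K=M]]-\E[K]\,\E[Y]=Mp^M - Mp\cdot p^M = Mp^M(1-p),
\]
and $\Var(K)=Mp(1-p)$. Because scaling $K$ by $1/M$ leaves correlation unchanged,
\[
\rho(Y,B)=\rho(Y,K)=\frac{Mp^M(1-p)}{\sqrt{p^M(1-p^M)\,Mp(1-p)}}=\sqrt{\frac{M\,p^{M-1}(1-p)}{1-p^M}}.
\]
This quantity is strictly positive because $p\in(0,1)$.

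The remaining step, which is the main point to check, is $\rho(Y,B)<1$. Two routes are available. The first is direct: the inequality is equivalent to $Mp^{M-1}(1-p)<1-p^M=(1-p)\sum_{i=0}^{M-1}p^i$, i.e.\ $Mp^{M-1}<\sum_{i=0}^{M-1}p^i$. This holds strictly because each term satisfies $p^i\ge p^{M-1}$, with strict inequality for $i<M-1$, and $M\ge2$ guarantees that at least one such term exists. The second route is structural: $|\rho|=1$ would require $B$ to be an affine function of $Y$ almost surely. That fails because $B$ takes at least three values with positive probability ($0$, $1/M$, and $1$, using $M\ge2$), whereas $Y$ takes only two. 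We would present the direct computation and mention the affine-dependence argument as a sanity check.
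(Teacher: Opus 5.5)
Your proof is correct and follows the paper's argument. You reduce both models to $\hat S_i = S_i$ at $\epsilon = 0$ so that $A = Y$, compute $\rho(Y,B)^2 = Mp^{M-1}(1-p)/(1-p^M)$ in closed form, and get $\rho(Y,B)<1$ from $Mp^{M-1} < \sum_{i=0}^{M-1} p^i$. Working with $\Cov(Y,K)$ instead of $\Cov(Y,B)$, checking $\Var(Y)>0$ explicitly, and adding the affine-dependence check (which the paper itself uses in its varying-length corollary) are only cosmetic additions.
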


\begin{proof}
At $\epsilon = 0$, both models give $\hat S_i = S_i$ almost surely (for the soft
model this follows from $\hat S_i \in [0,1]$ together with the conditional
means $\E[\hat S_i \mid S_i] \in \{0,1\}$). Hence $\hat K = K$ and
\[
  A = \min_i \hat S_i = \min_i S_i = \1[K = M] = Y,
\]
so $\rho(Y, A) = 1$.

For $B = K/M$ we compute. Writing $\beta_1 := \Prob(K=M) = p^M$,
\begin{gather*}
  \E[Y] = \beta_1, \qquad \Var(Y) = \beta_1(1-\beta_1), \\
  \E[B] = p, \qquad \Var(B) = \tfrac{p(1-p)}{M}, \\
  \E[YB] = \E\!\bigl[\1[K=M]\,\tfrac{K}{M}\bigr] = \Prob(K=M) = \beta_1.
\end{gather*}
Hence $\Cov(Y, B) = \beta_1 - \beta_1 p = \beta_1(1-p) > 0$ and
\[
\begin{aligned}
  \rho(Y, B)^2 &= \frac{\beta_1^2(1-p)^2}{\beta_1(1-\beta_1) \cdot p(1-p)/M} \\
  &= \frac{M p^{M-1}(1-p)}{1 - p^M}.
\end{aligned}
\]
Using $1 - p^M = (1-p)(1 + p + \cdots + p^{M-1})$,
\[
  \rho(Y, B)^2 = \frac{M p^{M-1}}{1 + p + p^2 + \cdots + p^{M-1}}.
\]
For $p \in (0,1)$, each term $p^i$ in the denominator satisfies
$p^i \geq p^{M-1}$, with strict inequality whenever $i < M-1$. Therefore the
denominator strictly exceeds $M p^{M-1}$, giving $\rho(Y, B)^2 < 1$. Combined
with $\rho(Y, B) > 0$, this yields $0 < \rho(Y, B) < 1 = \rho(Y, A)$.
\end{proof}

\begin{corollary}\label{cor:claim1-eps}
For each $M \geq 2$ and $p \in (0,1)$ there exists
$\epsilon_0 = \epsilon_0(M, p) > 0$ such that $\rho(Y, A) > \rho(Y, B)$ for all
$\epsilon \in [0, \epsilon_0)$, in either the hard or the soft prediction
model.
\end{corollary}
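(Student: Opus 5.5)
The plan is a continuity argument in $\epsilon$ starting from \Theorem~\ref{thm:claim1}. At $\epsilon=0$ we have the strict inequality $\rho(Y,A)=1>\rho(Y,B)$, so it suffices to show that $\epsilon\mapsto\rho(Y,A)$ and $\epsilon\mapsto\rho(Y,B)$ are continuous at $\epsilon=0$ (from the right) in each prediction model. Given continuity, take $\delta:=\tfrac12\bigl(1-\rho(Y,B)|_{\epsilon=0}\bigr)>0$ and choose $\epsilon_0$ so small that both correlations move by less than $\delta$ on $[0,\epsilon_0)$. Since $Y=h(K)$ does not depend on $\epsilon$, $\Var(Y)=p^M(1-p^M)>0$ is fixed, and only the covariance and variance of $A$ and $B$ need tracking.

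\textbf{Covariances.} By \Lemma~\ref{lem:tower}, $\Cov(Y,T)=\Cov(h(K),\tilde T(K))$, a finite sum over $k\in\{0,\dots,M\}$ with $\epsilon$-free binomial weights. \Lemma~\ref{lem:cond-exp}(1) shows $\tilde B(k)$ is affine in $\epsilon$, and \Lemma~\ref{lem:cond-exp}(2) shows that in the hard model $\tilde A(k)=(1-\epsilon)^k\epsilon^{M-k}$ is a polynomial. Both covariances are therefore continuous in the hard case. In the soft case I would bound $|A-A_0|\le\max_i|\hat S_i-S_i|\le\sum_i|\hat S_i-S_i|$, where $A_0=\min_i S_i$, using the $1$-Lipschitz property of $\min$ in sup-norm. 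The assumed $L^2$ (hence $L^1$) convergence $\hat S_i\to S_i$ then gives $\E[A\mid K=k]\to\E[A_0\mid K=k]$ for each $k$ with positive probability. This avoids working with the integral formula of \Lemma~\ref{lem:cond-exp}(3) directly.

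\textbf{Variances.} Here we need the second moments $\E[A^2]$ and $\E[B^2]$. In the hard case $A\in\{0,1\}$, so $\Var(A)=\E A(1-\E A)$ is polynomial in $\epsilon$. For $B$, we have $\Var(B)=\Var(\E[B\mid S])+\E\Var(B\mid S)$, where the first term is a polynomial and the second is $\frac{1}{M^2}\sum_i\E\Var(\hat S_i\mid S_i)$. In the soft case this conditional variance is continuous by the stated regularity of the second moments. For $A$ in the soft case, $\|A-A_0\|_{L^2}\le\sum_i\|\hat S_i-S_i\|_{L^2}\to0$ gives $\E A^2\to\E A_0^2$.

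\textbf{Main obstacle.} The key step is ensuring the denominators stay away from zero so the correlation is continuous. At $\epsilon=0$ we have $\Var(A_0)=p^M(1-p^M)>0$ and $\Var(K/M)=p(1-p)/M>0$, so by continuity both variances remain positive near $0$, and the quotient $\Cov/\sqrt{\Var(Y)\Var(T)}$ is continuous there. The soft case is the delicate part, because $\epsilon$ only parametrizes first moments. I would therefore formalize "continuity in $\epsilon$" as convergence along the given family, using the regularity assumption (equivalently $L^2$ convergence) exactly as above. The conclusion is then stated for all sufficiently small $\epsilon$, which yields $\epsilon_0(M,p)>0$.
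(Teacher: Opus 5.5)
Your proposal is correct and follows essentially the same route as the paper. You extend the strict inequality of Theorem~\ref{thm:claim1} from $\epsilon=0$ by continuity: the relevant moments are polynomial in $\epsilon$ in the hard model, in the soft model the regularity assumption gives $\hat S_i\to S_i$ in $L^2$ and hence, via the $1$-Lipschitz property of $\min$, $A\to\min_i S_i$ in $L^2$, and $\Var(A)$, $\Var(B)$ are strictly positive at $\epsilon=0$; the remaining differences (the law of total variance for $\Var(B)$ instead of expanding $\E[B^2]$, and tracking $\Cov(h(K),\tilde T(K))$ rather than raw moments) are cosmetic.
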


\begin{proof}
We argue by continuity at $\epsilon = 0$. We must show that
$\E[Y], \E[A], \E[B], \E[YA], \E[YB], \E[A^2], \E[B^2]$ are continuous in
$\epsilon$ at $0$, and that both $\Var(A)$ and $\Var(B)$ are strictly positive
at $\epsilon = 0$.

\emph{Hard case.} Each of the listed quantities is a polynomial in $\epsilon$
(the conditional probabilities $\Prob(\hat S_i = 1 \mid S_i)$ are linear in
$\epsilon$, and the listed expectations are finite sums of products of such
terms), hence continuous on $[0,1/2)$.

\emph{Soft case.} For $B$:
$\E[B]$ is linear in $\epsilon$ by \Lemma~\ref{lem:cond-exp}, and
$\E[YB] = \E[h(K)\,\tilde B(K)]$ with $\tilde B(K) = \epsilon + (1-2\epsilon)K/M$
by \Lemma~\ref{lem:tower}, which is linear in $\epsilon$. For $\E[B^2]$,
\[
  \E[B^2] = \frac{1}{M^2}\!\left(\sum_i \E[\hat S_i^2]
  + \sum_{i \neq j} \E[\hat S_i \hat S_j]\right),
\]
where conditional independence gives
$\E[\hat S_i \hat S_j] = \E[(\epsilon + (1-2\epsilon)S_i)(\epsilon + (1-2\epsilon)S_j)]$
for $i \neq j$, a polynomial in $\epsilon$, while $\E[\hat S_i^2]$ is continuous
in $\epsilon$ at $0$ by the regularity assumption.

For $A$: the regularity assumption $\E[\hat S_i^2 \mid S_i]$ continuous at
$\epsilon = 0$ together with $\E[\hat S_i \mid S_i = 1] = 1 - \epsilon$ and
$\E[\hat S_i \mid S_i = 0] = \epsilon$ implies
\[
  \E[(\hat S_i - S_i)^2] = \E[\hat S_i^2] - 2\E[\hat S_i S_i] + \E[S_i^2]
  \xrightarrow{\epsilon \to 0} 0,
\]
i.e., $\hat S_i \to S_i$ in $L^2$. The map
$(x_1,\ldots,x_M) \mapsto \min_i x_i$ from $[0,1]^M$ to $[0,1]$ is $1$-Lipschitz
in the sup norm (and hence in $L^2$ on a bounded domain), so
$A = \min_i \hat S_i \to \min_i S_i$ in $L^2$ as $\epsilon \to 0$. By
$L^2$ convergence, $\E[A]$, $\E[A^2]$, and $\E[YA] = \E[Y \min_i \hat S_i]$ are
continuous at $\epsilon = 0$ (using $|Y| \leq 1$ for the last one).

\emph{Variances at $\epsilon = 0$.} At $\epsilon = 0$, $A = Y$ so
$\Var(A) = \Var(Y) = p^M(1-p^M) > 0$ (since $p \in (0,1)$ and $M \geq 2$ make
$p^M \in (0,1)$). And $\Var(B) = p(1-p)/M > 0$. Hence both correlations are
well-defined on a neighborhood of $0$.

The strict inequality $\rho(Y, A) > \rho(Y, B)$ at $\epsilon = 0$ from
\Theorem~\ref{thm:claim1} therefore extends to a neighborhood
$[0, \epsilon_0)$ by continuity.
\end{proof}

\begin{corollary}[Varying summary length]
\label{cor:varying-M}
Let the summary length $M$ be a random variable supported on a finite set
$\mathcal{M} \subseteq \{2, 3, \ldots\}$, and condition on $M$ as in the Setup,
so that $K \mid M \sim \mathrm{Binomial}(M, p)$ and the unbiased rater sets
$Y = \1[K = M]$ for every value of $M$. Then there exists $\epsilon_0' > 0$
such that $\rho(Y, A) > \rho(Y, B)$ for all $\epsilon \in [0, \epsilon_0')$,
where $\rho$ denotes Pearson correlation over the pooled population. Hence
$\rho(Y, A) \leq \rho(Y, B)$ still implies the rater is biased.
\end{corollary}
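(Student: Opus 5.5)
The argument mirrors \Corollary~\ref{cor:claim1-eps}: first show the strict inequality at $\epsilon = 0$ for the pooled population, then extend it to a neighborhood by continuity.

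\textbf{Step 1 (the case $\epsilon = 0$).} Here $\hat S_i = S_i$ almost surely in both prediction models, exactly as in the proof of \Theorem~\ref{thm:claim1}. Hence $A = \min_i S_i = \1[K = M] = Y$ for every realized value of $M$. This identity holds pointwise, so it survives pooling over $M$, and $\rho(Y, A) = 1$ on the pooled population.

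For $B = K/M$, suppose $\rho(Y, B) = 1$. Then $B = a + bY$ almost surely for some constants with $b > 0$. On $\{Y = 1\}$ we have $B = 1$, and this event has positive probability because $p^M > 0$ for every $M \in \mathcal{M}$. On $\{Y = 0\}$, $B$ takes at least two distinct values with positive probability. Fix any $m \in \mathcal{M}$ with $\Prob(M = m) > 0$. Since $m \geq 2$, conditional on $M = m$ both $K = 0$ (giving $B = 0$) and $K = m - 1$ (giving $B = (m-1)/m > 0$) occur with positive probability. So $B$ is not an affine function of $Y$, which is a contradiction. Hence $\rho(Y, B) < 1 = \rho(Y, A)$.

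We do not need $\rho(Y, B) > 0$, but it holds anyway: $\Cov(Y, B) = \E[YB] - \E[Y]\E[B]$, and $YB = Y$, so $\Cov(Y, B) = \E[Y](1 - \E[B]) > 0$.

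\textbf{Step 2 (continuity).} Since $\mathcal{M}$ is finite, every pooled moment is a finite mixture of the conditional ones:
\[
\E[\,\cdot\,] = \sum_{m \in \mathcal{M}} \Prob(M = m)\, \E[\,\cdot \mid M = m\,].
\]
This applies to $\E[Y]$, $\E[A]$, $\E[B]$, $\E[YA]$, $\E[YB]$, $\E[A^2]$ and $\E[B^2]$. The proof of \Corollary~\ref{cor:claim1-eps} shows that each conditional moment is continuous in $\epsilon$ at $0$. In the hard case each is a polynomial in $\epsilon$. In the soft case continuity comes from the regularity assumption together with $L^2$ convergence of $\min_i \hat S_i$. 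Finite convex combinations of continuous functions are continuous, so the pooled moments are continuous at $\epsilon = 0$.

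\textbf{Step 3 (nondegenerate variances).} At $\epsilon = 0$ we have $\Var(A) = \Var(Y) > 0$, because $Y$ takes both values with positive probability. We also have $\Var(B) > 0$, by the two-value argument in Step 1. So both pooled correlations are continuous functions of $\epsilon$ near $0$. The strict gap $\rho(Y, A) > \rho(Y, B)$ at $\epsilon = 0$ therefore persists on some interval $[0, \epsilon_0')$.

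\textbf{Step 4 (contrapositive).} For $\epsilon < \epsilon_0'$, any unbiased rater satisfies $\rho(Y, A) > \rho(Y, B)$. Contrapositively, observing $\rho(Y, A) \leq \rho(Y, B)$ means the rater is not the unbiased rater $\1[K = M]$, i.e., the rater is biased.

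\textbf{Main obstacle.} The delicate point is Step 1. The closed form for $\rho(Y, B)$ from \Theorem~\ref{thm:claim1} does not pool, because pooled covariances pick up between-group terms. The plan avoids this by replacing the explicit formula with the non-affineness argument, which needs only the equality case of Cauchy–Schwarz. Everything else follows \Corollary~\ref{cor:claim1-eps} essentially verbatim.
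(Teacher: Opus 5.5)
Your proof is correct and follows the paper's argument essentially step for step. At $\epsilon=0$ you get $A=Y$ pointwise, so $\rho(Y,A)=1$; $B$ is not affine in $Y$, so $\rho(Y,B)<1$; and the pooled moments are finite mixtures of the fixed-$M$ moments, hence continuous at $0$ with $\Var(A),\Var(B)>0$. Your explicit witnesses $K=0$ and $K=m-1$ simply make the paper's ``several values of $B$'' concrete.

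One correction to your side remark. At $\epsilon=0$ we have $\E[B\mid M]=p$ for every $M$, so the between-group terms in $\Cov(Y,B)$ and $\Var(B)$ vanish, and only $\Var(Y)$ picks one up. The paper's appendix in fact gives a pooled closed form for $\rho(Y,B)$. The non-affineness argument is still the cleaner route.
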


\begin{proof}
At $\epsilon = 0$ the judge is exact, so for every summary, whatever its length
$M$, $A = \min_i S_i = \1[K = M] = Y$; hence $A = Y$ almost surely over the
pooled population and $\rho(Y, A) = 1$. Since $Y = 1$ forces $B = 1$ while
$Y = 0$ admits several values of $B = K/M$, $B$ is not an affine function of
$Y$, so $\rho(Y, B) < 1$. Each pooled moment is the finite mixture
$\E[\,\cdot\,] = \sum_{m \in \mathcal{M}} \Prob(M = m)\, \E[\,\cdot \mid M = m]$
of the fixed-length moments, each continuous in $\epsilon$ at $0$ by
\Corollary~\ref{cor:claim1-eps}; a finite mixture of continuous functions is
continuous, and $\Var(A) = \Var(Y) > 0$ and $\Var(B) > 0$ at $\epsilon = 0$.
Continuity then extends $\rho(Y, A) > \rho(Y, B)$ from $\epsilon = 0$ to a
neighborhood $[0, \epsilon_0')$.
\end{proof}

\begin{corollary}\label{cor:bias-detection}
Let $M \geq 2$ and $p \in (0,1)$. Let $\epsilon_0 = \epsilon_0(M, p) > 0$ be the
threshold guaranteed by \Corollary~\ref{cor:claim1-eps}. Assume the model's
error rate satisfies $\epsilon \in [0, \epsilon_0)$, under either the hard or
the soft prediction model. If the observed Pearson correlations satisfy
$\rho(Y, A) \leq \rho(Y, B)$, then the human rater cannot be unbiased; that is,
$h(K) \neq \1[K = M]$.
\end{corollary}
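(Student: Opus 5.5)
This is the contrapositive of \Corollary~\ref{cor:claim1-eps}, and I would prove it by contradiction. Fix $M \geq 2$, $p \in (0,1)$, and the prediction model (hard or soft). Let $\epsilon_0(M,p)$ be the threshold produced by \Corollary~\ref{cor:claim1-eps}, and assume $\epsilon \in [0,\epsilon_0)$. Suppose, contrary to the claim, that the rater is unbiased, i.e.\ $h(K) = \1[K=M]$. Then every hypothesis of \Corollary~\ref{cor:claim1-eps} holds for this $h$ and $\epsilon$: the same $M$ and $p$, the same noise model, and $\epsilon$ in the guaranteed range. That corollary therefore gives the strict inequality $\rho(Y,A) > \rho(Y,B)$. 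This contradicts the observed $\rho(Y,A) \leq \rho(Y,B)$, so $h(K) \neq \1[K=M]$.

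The only step that needs care is making sure $\epsilon_0$ does not depend on $h$. Otherwise the quantifiers would be circular. It is fine here: in \Corollary~\ref{cor:claim1-eps}, $\epsilon_0$ was constructed with $h$ fixed to the unbiased rule $\1[K=M]$. It depends only on $(M,p)$ and the prediction model, through continuity of the moments in $\epsilon$ given by \Lemma~\ref{lem:cond-exp} and \Lemma~\ref{lem:tower}. So the threshold is fixed before any hypothesis about the rater is entertained.

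I would also check that the correlations are well defined, so that the ``observed'' inequality refers to genuine population Pearson correlations. This follows from the variance positivity shown in the proof of \Corollary~\ref{cor:claim1-eps}: $\Var(Y)=p^M(1-p^M)>0$ under the unbiased hypothesis, and $\Var(A), \Var(B) > 0$ near $\epsilon=0$. If $\epsilon_0$ needs shrinking to guarantee this, that was already done in that corollary.

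Finally, I would remark that the conclusion is only that $h$ differs from the conjunctive rule. The statement does not claim $h$ has the threshold form $\1[K\geq j_0]$; that would need a separate argument in the spirit of the biased-rater case. For varying summary lengths, the same argument goes through verbatim using \Corollary~\ref{cor:varying-M} with $\epsilon_0'$ in place of $\epsilon_0$.
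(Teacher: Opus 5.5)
Your proof is correct and matches the paper's. Both arguments suppose $h(K)=\1[K=M]$, use Corollary~\ref{cor:claim1-eps} to obtain $\rho(Y,A)>\rho(Y,B)$ for $\epsilon\in[0,\epsilon_0)$, and conclude by contradiction with $\rho(Y,A)\le\rho(Y,B)$; your extra checks (that $\epsilon_0$ is fixed independently of the actual rater, and that the correlations are well defined) are sound but go beyond what the paper's two-line argument spells out.
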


\begin{proof}
By contraposition. Suppose, for contradiction, that $h(K) = \1[K = M]$. Since
$\epsilon \in [0, \epsilon_0)$, \Corollary~\ref{cor:claim1-eps} gives the strict
inequality $\rho(Y, A) > \rho(Y, B)$ in either prediction model, contradicting
the assumption $\rho(Y, A) \leq \rho(Y, B)$. Hence
$h(K) \neq \1[K = M]$.
\end{proof}

\begin{remark}[Diagnostic Test for Rater Bias]
\Corollary~\ref{cor:bias-detection} is the formal statement of the diagnostic
used as \Section~\ref{sec:method:diagnostic} in the main text. The result is robust to the
choice of LLM scoring scheme: it holds whether $\hat S_i$ is a hard binary
label or a soft softmax probability. When the LLM judge is highly accurate
($\epsilon \approx 0$), an unbiased rater should align more strongly with the
strict aggregator $A$ than with the mean aggregator $B$. While the exact
equality $\rho(Y, A) = 1$ (\Theorem~\ref{thm:claim1}) is fragile and broken by
any positive $\epsilon$, the strict inequality $\rho(Y, A) > \rho(Y, B)$ is
robust within $[0, \epsilon_0)$. Consequently, if the mean aggregator $B$
correlates with $Y$ as well as or better than the strict aggregator $A$, this
is a strong statistical signal that the raters employ a biased heuristic, the
canonical case being $h(K) = \1[K \geq j_0]$ for some $j_0 < M$.
\end{remark}

\subsection{Computing $\epsilon_0$}
\label{app:eps0-math}

We compute $\epsilon_0(D)$ for dataset $D$ from $\hat p$
(\Appendix~\ref{app:annotation}) and the dataset's empirical distribution of
summary lengths $\{M_g\}_{g=1}^{N_D}$ (read from the LLM cache: $M_g$ is the
number of sentences for summary $g$). Under the model of
\Section~\ref{sec:method:diagnostic} with hard predictions,
$S_1,\ldots,S_M\stackrel{\mathrm{iid}}{\sim}\mathrm{Bernoulli}(p)$,
$K=\sum_i S_i$; the judge outputs $\hat S_i$ are i.i.d.\
$\mathrm{Bernoulli}(r)$ with $r=p+\epsilon(1-2p)$; the unbiased rater is
$Y=\1[K=M]$; and the aggregators are $A=\min_i\hat S_i$ and $B=\hat K/M$
with $\hat K=\sum_i\hat S_i$.

\paragraph{Pooled moments for varying $M$.}
Define $\mu_q = \tfrac{1}{N}\textstyle\sum_g p^{M_g}$, $\mu_{\mathrm{inv}M} = \tfrac{1}{N}\textstyle\sum_g 1/M_g$, $\mu_A(\epsilon) = \tfrac{1}{N}\textstyle\sum_g r^{M_g}$, $\mu_{YA}(\epsilon) = \tfrac{1}{N}\textstyle\sum_g \bigl(p(1-\epsilon)\bigr)^{M_g}.$
Pooled moments are finite mixtures of the fixed-$M$ moments, giving
\begin{align*}
\rho(Y,A) &= \frac{\mu_{YA}-\mu_q\,\mu_A}
                  {\sqrt{\mu_q(1-\mu_q)\,\mu_A(1-\mu_A)}},\\[4pt]
\rho(Y,B) &= \frac{\mu_q\,(1-p)(1-2\epsilon)}
                  {\sqrt{\mu_q(1-\mu_q)\,r(1-r)\,\mu_{\mathrm{inv}M}}}.
\end{align*}
$\epsilon_0(D)$ is the smallest $\epsilon\in(0,\tfrac12)$ at which
$\rho(Y,A)=\rho(Y,B)$, located by bisection. At $\epsilon=0$ the judge is
exact, so $A=Y$ per summary for every $M$, giving $\rho(Y,A)=1>\rho(Y,B)$;
$\rho(Y,A)$ decays faster than $\rho(Y,B)$ as $\epsilon$ grows and the
first crossing is the threshold. \emph{If no root exists in $(0,\tfrac12)$},
$\rho(Y,A)>\rho(Y,B)$ for every judge error in that interval and the noise
condition is unconditional (as on DiverSumm and HaluEval in our data).

\paragraph{Estimating $\hat p$ and $\hat\epsilon$.}
From the annotated dataset $D$ (summaries $g=1,\ldots,G_D$, sentences
$j=1,\ldots,M_g$ with ground-truth labels $S_{gj}\in\{0,1\}$ and cached
judge labels $\hat S_{gj}^{J}$),
\[
\hat p_D = \frac{\sum_{g,j} S_{gj}}{\sum_g M_g},
\qquad
\hat\epsilon^{J}_D = \frac{\sum_{g,j}\1\!\bigl[\hat S_{gj}^{J}\neq S_{gj}\bigr]}{\sum_g M_g}.
\]
``Unclear / fragment'' labels are excluded from both sums. 95\% confidence
intervals resample whole summaries (sentences within a summary are
correlated): 2000 bootstrap replicates, $2.5/97.5$ percentile intervals.

\section{Responsible NLP Research} \label{sec:appendix:checklist}

\subsection{Dataset License}
Aggrefact is under Apache-2.0 license. DiverSumm and HaluEval
are under MIT license. We ensure that these benchmarks were used solely for academic purposes. For data safety, content filtering was conducted when the creators built
the original datasets. It is not avoidable that some documents especially the news, may contain public names/entities and possibly sensitive/offensive content.


\subsection{Recruitment, Consent And Ethics of Annotation}
Annotators are the authors of this paper, graduate-student NLP researchers
with native or near-native English proficiency; no external participants
were recruited and no payment was made. Ethics-board approval was not required because no external human subjects were recruited and only public benchmark data was used.

\subsection{AI Assistant Use}
AI assistants were used for writing in grammar and style checking, and coding in debugging, documentation and comments. All claims, experiments, annotations, and final text were written, implemented, and checked by authors.


\subsection{Inference Setup And Hyperparameters}
\label{app:inference}
We use 2 H100 GPUs for inference. All judges are served through a local vLLM OpenAI-compatible endpoint in \texttt{bfloat16}. Decoding is greedy: \texttt{temperature}=0.0, \texttt{top\_p}=1.0, \texttt{max\_tokens}=8192 per call. For Qwen3 we set \texttt{enable\_thinking}=false to suppress \texttt{<think>} blocks so the model emits the JSON answer directly. Soft labels read \texttt{top\_logprobs}=20 at the JSON value position. Sentence splitting uses \texttt{spaCy==3.8.13} with \texttt{en\_core\_web\_sm}. All per-judge predictions are cached on disk so $\min$ and $\mathrm{mean}$ aggregation are compared on identical inputs. The prompt is in \Table~\ref{tab:sentence-faithfulness-prompt}.

\section{Annotation Collection Details}
\label{app:annotation}

We collect two complementary sets of manual labels via the same
self-contained HTML annotation page (released with the code): one for
sentence-level ground truth used to estimate $\hat\epsilon$ and $\hat p$
(\S\ref{app:annot-sent}), and one to manually adjudicate suspicious
summaries (\S\ref{app:annot-susp}). The pages show the source document
on the left and the summary on the right; model predictions and the global
human label are \emph{hidden} during annotation to avoid anchoring.
Three options appear wherever a label is required:
\textbf{Faithful}, \textbf{Unfaithful}, and \textbf{Unclear / fragment}.


\subsection{Sentence-Level Ground Truth ($\hat\epsilon$, $\hat p$)}
\label{app:annot-sent}

\paragraph{Procedure.}
For each of the four benchmarks we sample 25 summaries uniformly at random
(seed $0$; deterministic, so collaborators who regenerate the tool see the
identical set). The annotator labels \emph{every} sentence of each sampled
summary against the source document. The \textit{Unclear / fragment}
option lets the annotator exclude sentence-splitting artefacts (incomplete
units lacking a verifiable claim) without forcing a label; such sentences
are dropped from $\hat\epsilon$ and $\hat p$.

\paragraph{Per-dataset statistics.}
\Table~\ref{tab:annot-stats} reports the labelling counts. The
unclear/fragment rate is 2--11\% on AggreFact-CNN, DiverSumm and HaluEval,
but \textbf{54\%} on AggreFact-Other, whose summaries come from
heterogeneous non-summarization sources (claims, QA answers, RAG outputs)
that the spaCy sentence splitter often fragments. We treat this as a
limitation of the AggreFact-Other sentence-level results.

\begin{table}[t]
\centering
\resizebox{\columnwidth}{!}{%
\begin{tabular}{lcccc}
\hline
Dataset & \#summ.\ & \#labels & \#usable & \% unclear \\
\hline
AggreFact-CNN     & 25 & 84  & 82  & 2.4\% \\
AggreFact-Other   & 25 & 61  & 28  & 54.1\% \\
DiverSumm         & 25 & 253 & 226 & 10.7\% \\
HaluEval          & 25 & 94  & 90  & 4.3\% \\
\hline
\end{tabular}
}
\caption{Sentence-level annotation statistics. The unclear / fragment
category is dropped from $\hat\epsilon$ and $\hat p$.}
\label{tab:annot-stats}
\end{table}

\subsection{Suspicious-Example Review}
\label{app:annot-susp}

\paragraph{Procedure.}
For each benchmark we sample up to 25 \emph{suspicious} summaries (human
label faithful, all five judges flag at least one sentence as unfaithful),
sorted by \texttt{doc\_id} for determinism. The annotator labels every
sentence flagged by all five judges with the same three options as above;
other sentences are shown for context but require no label. Sentences
shorter than five word tokens are excluded as fragments before labelling.

\paragraph{Per-dataset Statistics.}
\Table~\ref{tab:annot-stats-susp} reports, per benchmark, the size of the
suspicious pool, the number of summaries actually reviewed (cap of 25), and
the per-sentence verdict counts on the all-five-flagged sentences inside
those summaries. The cap is binding on AggreFact-Other and HaluEval; on
AggreFact-CNN the pool has only 15 candidates (13 after fragment filtering)
and on DiverSumm only 11 (6 after fragment filtering).

\begin{table}[t]
\centering
\resizebox{\columnwidth}{!}{%
\begin{tabular}{lcccccc}
\hline
Dataset & pool & \#summ.\ & U & F & Uncl. & \% U \\
\hline
AggreFact-CNN     &   15 & 13 & 10 & 3 & 0 & 77\% \\
AggreFact-Other   &  203 & 25 & 22 & 0 & 6 & 79\% \\
DiverSumm         &   11 &  6 &  5 & 2 & 3 & 50\% \\
HaluEval          & 1{,}444 & 25 & 24 & 1 & 2 & 89\% \\
\hline
\end{tabular}
}
\caption{Suspicious-example review counts at the \emph{sentence} level.
U = unfaithful, F = faithful, Uncl.\ = unclear (\S\ref{app:annot-susp}).
\textit{Pool} is the total number of suspicious summaries in the benchmark;
\textit{\#summ.}\ is the number of summaries actually reviewed (capped at
25). Counts on the right are over all-five-flagged sentences inside those
summaries.}
\label{tab:annot-stats-susp}
\end{table}

\subsection{Summary Composition: Human vs LLM}
\label{app:composition}

\Figure~\ref{fig:composition} classifies each summary by the joint
faithfulness of its sentences -- \emph{all unfaithful}, \emph{mixed}
($0 < K < M$), or \emph{all faithful} -- under (left) the human
sentence-level annotation of \S\ref{app:annot-sent} and (right) the mean of
the five LLM judges. Only mixed summaries discriminate $\min$ from
$\mathrm{mean}$, so the size of the orange band is the statistical power of
the $\rho(Y, A) {\le} \rho(Y, B)$ diagnostic.

\begin{figure}[t]
  \centering
  \includegraphics[width=\columnwidth]{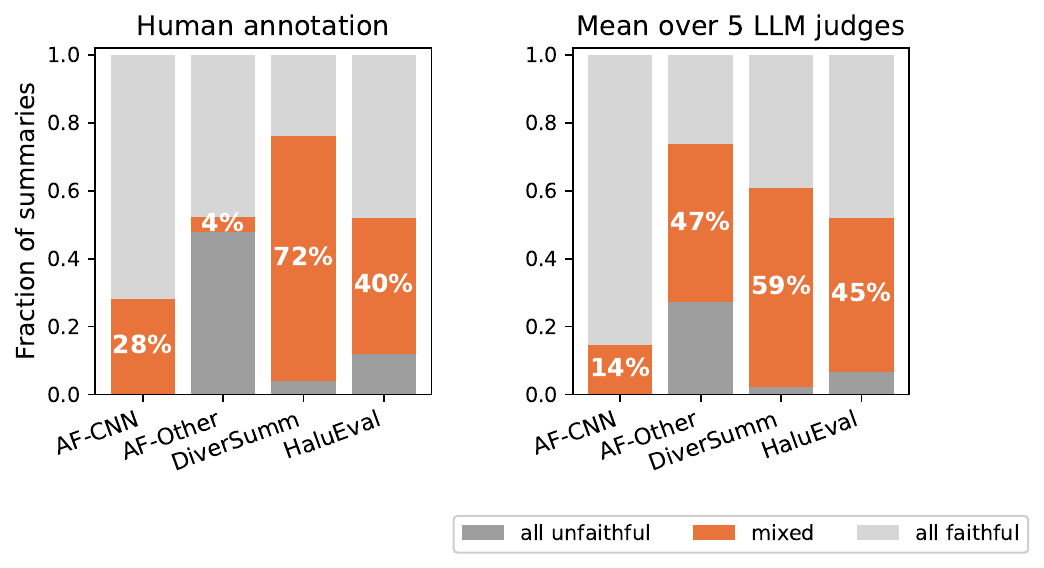}
  \caption{Stacked composition of summaries per benchmark, classified as
  \emph{all unfaithful} / \emph{mixed} / \emph{all faithful}. Left: human
  sentence-level annotation (Tool~A, 25 summaries per benchmark, unclear
  fragments dropped). Right: mean over the five LLM judges over the entire
  benchmark. Bold percentages mark the mixed segment.}
  \label{fig:composition}
\end{figure}

The two panels agree closely on DiverSumm (72\% vs 59\% mixed) and HaluEval
(40\% vs 45\%): both annotators see the same kind of within-summary
heterogeneity, and the diagnostic is well-powered on both. AggreFact-CNN
flips the direction -- humans flag 28\% of summaries as mixed but the
judges only 14\%; the LLMs miss subtle local errors there, which is
consistent with the small absolute correlation gap we observe in
\Table~\ref{tab:meta-eval-main}. AggreFact-Other is the most divergent
case: humans classify $\sim$96\% of summaries as binary (48\% all-faithful,
48\% all-unfaithful, only 4\% mixed), whereas the LLMs split 47\% as
mixed. Inspecting the disagreements (\S\ref{app:annot-susp},
\Table~\ref{tab:manual-error-cases}) suggests the LLMs are right more often
than the headline mismatch implies, but the over-segmentation effect
inflates the LLM-side mixed fraction relative to the truth. In short, the
LLM-mean mixed fraction is a workable proxy for diagnostic power, but
should be read with the human composition as a sanity check.

\section{Subclaim-Level Analysis}
\label{app:subclaim}

\subsection{Motivation}

We use sentences as natural local semantic units at our main experiments. This choice is simple
and broadly applicable. However, one sentence may contain multiple independently verifiable factual units, and a sentence-level label may conceal an error in one of these units \citep{alignscore,factscore,tang-etal-2024-minicheck,mishra2024finegrained,VeriScore,DiscInfer,Discourse,factcg}. 

We therefore repeat the analysis one level below the sentence. We treat the original human sentence label as the global label and automatically decompose the sentence into atomic subclaims, which serve as the local units. 
Under a strict conjunctive rule, a sentence is faithful only if every subclaim is supported by the source. 


\subsection{Datasets and Experimental Setup}

\paragraph{Datasets.}
We use two datasets with native human sentence-level faithfulness labels: TofuEval \citep{tang-etal-2024-tofueval}, containing 4,903 expert-labeled summary sentences, and MSumBench \citep{Multi_dimensional}, 
containing 5,367 human-labeled summary sentences. Unlike the main summary-level experiments, the labels analyzed here are attached directly to individual sentences. 

\paragraph{Subclaim decomposition.}
We decompose each summary sentence into self-contained atomic subclaims using Gemma-3-27B-IT and the atomic-fact decomposition prompt introduced by FActScore \citep{factscore}. Decomposition is performed without using the
source document, so the decomposer identifies only the propositions expressed by the summary sentence rather than evaluating their correctness. Each generated subclaim is then independently evaluated against the source document by the same five LLM judges used in the main experiments.

After decomposition, TofuEval contains 23,121 subclaims over
4,903 sentences, with 4.72 $\pm$ 1.63 subclaims per sentence;
MSumBench contains 26,979 subclaims over 5,367 sentences, with
5.03 $\pm$ 1.75 subclaims per sentence. In total,
99.4\% of the sentences contain at least two subclaims on both datasets, and we exclude the rest since a single subclaim cannot separate the $\min$ and $\mathrm{mean}$ aggregations.

\subsection{Correlation Results}

\input{tables/subclaim_results}

As shown in \Table~\ref{tab:subclaim-results}, mean aggregation correlates strongly with the human sentence label for every judge on both datasets.
The mean--minimum differences range from $0.105$ to $0.160$ on TofuEval and
from $0.029$ to $0.099$ on MSumBench. The pattern therefore occurs even when
the original annotation unit is a sentence and the local units are its atomic subclaims.
The result suggests that non-conjunctive, mean-like behavior can arise within the original human annotation unit itself.

\subsection{Manual Validation of Consensus Disagreements}

The correlation pattern could partly result from errors in subclaim decomposition or automatic subclaim verification. 
We therefore identify human-faithful sentences for which all five judges agree that at least one
subclaim is unsupported. This conservative consensus criterion flags 13.9\%
of human-faithful TofuEval sentences and 9.8\% of human-faithful MSumBench sentences.

We manually inspect 10 samples of these consensus-flagged sentences for each dataset. For each case, the annotator examines the original source, the human-labeled summary sentence, the generated subclaims, and the subclaim identified as unsupported. A case is counted as a confirmed error only when the subclaim is genuinely expressed by the sentence and is contradicted by, or unsupported by, the source.

Manual review confirms genuine violations of strict conjunction in 4 of 10 reviewed TofuEval cases and 5 of 10 reviewed MSumBench cases. Representative confirmed cases are shown in \Table~\ref{tab:subclaim-error-cases}.

\input{tables/subclaim_error_case}

These cases show that a positive sentence-level label can conceal several types of local error, including incorrect numbers, unsupported attributes, missing qualifications, overstatements, and reversals of source conditions.
The disagreement is often caused by one erroneous subclaim embedded within an otherwise faithful sentence. This pattern explains why mean aggregation may align more closely with the original sentence label than strict minimum aggregation.


\section{Full Results}
\label{app:full}

This appendix collects supporting tables and the full meta-evaluation
results. \Table~\ref{tab:dataset-subsets} breaks each of the four
benchmarks down by underlying dataset, extending \Table~\ref{tab:data}.
\Tables~\ref{tab:full-aggrefact-cnn},
\ref{tab:full-aggrefact-other-multi}, \ref{tab:full-diversumm},
and~\ref{tab:full-halueval} report the full meta-evaluation results across
all five LLM judges, four benchmarks, and the complete set of aggregation
rules, for both hard and soft sentence predictions and all four correlation
metrics. Each table corresponds to one benchmark. 

Besides minimum and mean aggregation, we also include maximum aggregation as a sanity check. Max is not a plausible faithfulness rule, but it helps rule out a degenerate explanation in which summaries are naturally homogeneous at the sentence level (all sentences are positive or negative in one sample), so that min and mean become nearly equivalent. The consistently weaker max correlations indicate that the observed mean--min pattern is not simply driven by such homogeneous examples. In stead, max matches neither the benchmark’s strict conjunctive definition nor the averaging-like behavior reflected in human labels.

\Table~\ref{tab:eps-long}
gives the per-(judge, benchmark) sentence-level error rate $\hat\epsilon$
with 95\% bootstrap CIs together with the \Section~\ref{sec:method:diagnostic} threshold
$\epsilon_0$. \Tables~\ref{tab:dataset_annotation_1}
and~\ref{tab:dataset_annotation_2} document the original annotation schemes
of the source datasets and how each was binarised for benchmark use.
Finally, \Table~\ref{tab:manual-error-cases} lists representative
manual-review errors drawn from the suspicious-summary review
(\S\ref{app:annot-susp}).

\onecolumn
\begin{table*}[t]
\centering
\small
\begin{tabular}{lrc}
\hline
Subset & Examples & Sents/summary ($\mu\pm\sigma$) \\
\hline
\multicolumn{3}{l}{\textit{AggreFact-CNN} (total 1,017)} \\
\quad SummEval & 399 & 3.24 $\pm$ 0.79 \\
\quad CLIFF & 300 & 3.34 $\pm$ 0.96 \\
\quad FRANK & 250 & 3.21 $\pm$ 0.75 \\
\quad Polytope & 68 & 3.25 $\pm$ 0.63 \\
\hline
\multicolumn{3}{l}{\textit{AggreFact-Other} (total 644)} \\
\quad ExpertQA & 357 & 2.12 $\pm$ 0.49 \\
\quad RAGTruth & 224 & 2.25 $\pm$ 0.95 \\
\quad LFQA & 19 & 2.05 $\pm$ 0.23 \\
\quad ClaimVerify & 14 & 2.07 $\pm$ 0.27 \\
\quad TofuEval-MediaSum & 11 & 2.00 $\pm$ 0.00 \\
\quad Reveal & 9 & 2.00 $\pm$ 0.00 \\
\quad TofuEval-MeetingBank & 5 & 2.00 $\pm$ 0.00 \\
\quad WiCE & 4 & 2.00 $\pm$ 0.00 \\
\quad AggreFact-XSum & 1 & 2.00 $\pm$ -- \\
\hline
\multicolumn{3}{l}{\textit{DiverSumm} (total 563)} \\
\quad GovReport & 147 & 14.95 $\pm$ 2.91 \\
\quad arXiv & 146 & 6.27 $\pm$ 2.66 \\
\quad ChemSum & 90 & 7.24 $\pm$ 2.41 \\
\quad MultiNews & 90 & 7.30 $\pm$ 3.28 \\
\quad QMSum & 90 & 3.03 $\pm$ 1.12 \\
\hline
\multicolumn{3}{l}{\textit{HaluEval} (total 20,000)} \\
\quad hallucinated & 10,000 & 3.38 $\pm$ 1.20 \\
\quad reference (faithful) & 10,000 & 3.80 $\pm$ 1.37 \\
\hline
\end{tabular}
\caption{Breakdown of the four benchmarks summarized in \Table~\ref{tab:data}. Sentence counts use the same spaCy split as the meta-evaluation. For HaluEval the two datasets are the original reference summaries and their hallucinated counterparts.}
\label{tab:dataset-subsets}
\end{table*}

\input{tables/full_aggrefact_cnn}
\input{tables/full_aggrefact_other}
\input{tables/full_diversumm}
\input{tables/full_halueval}
\input{tables/epsilon_long}
\input{tables/human_guideline_part1}
\input{tables/human_guideline_part2}

\input{tables/error_case}
\input{tables/prompt}
\end{document}

%% file: tables/subclaim_results.tex
\begin{table*}[t]
\centering
\small
\begin{tabular}{lcc@{\hspace{1.1cm}}cc}
\toprule
& \multicolumn{2}{c}{TofuEval}
& \multicolumn{2}{c}{MSumBench} \\
\cmidrule(lr){2-3}
\cmidrule(lr){4-5}
Judge
& $\min$
& $\mathrm{mean}$
& $\min$
& $\mathrm{mean}$ \\
\midrule
Qwen3-4B
& 0.292 & \textbf{0.433}
& 0.175 & \textbf{0.274} \\

Qwen3-8B
& 0.255 & \textbf{0.415}
& 0.171 & \textbf{0.256} \\

Qwen3-32B
& 0.338 & \textbf{0.467}
& 0.197 & \textbf{0.281} \\

OLMo-3-7B
& 0.195 & \textbf{0.300}
& 0.032 & \textbf{0.061} \\

Llama-3.1-8B
& 0.171 & \textbf{0.326}
& 0.088 & \textbf{0.160} \\
\bottomrule
\end{tabular}

\caption{
Pearson correlation between human sentence-level faithfulness labels and
minimum or mean aggregation of subclaim-level LLM judgments. The higher
aggregation for each judge--dataset pair is shown in bold. Mean aggregation
exceeds minimum aggregation in all ten judge--dataset comparisons.
}
\label{tab:subclaim-results}
\end{table*}

%% file: tables/subclaim_error_case.tex
\begin{table*}[t!]
\centering
\scriptsize
\setlength{\tabcolsep}{3pt}
\renewcommand{\arraystretch}{1.15}

\begin{tabular}{
    p{0.16\textwidth}
    p{0.13\textwidth}
    p{0.10\textwidth}
    p{0.53\textwidth}
}
\hline
Error type & Dataset name & Sample id & Judgment \\
\hline

\multicolumn{4}{c}{\textbf{TofuEval}} \\
\hline

Number mismatch
& TofuEval
& \texttt{004673}
& The human sentence label is faithful, but the sentence refers to
Resolution \texttt{20-0386}, whereas the source refers to Resolution
\texttt{21-0386}. The remaining content is supported, but this incorrect
number makes one subclaim unfaithful. \\

Unsupported inference
& TofuEval
& \texttt{000694}
& The sentence states that there may be a battle over ownership of the
catalog. The source only asks what will happen to the catalog and does not
mention any ownership dispute. This additional subclaim is unsupported. \\

Conditionality reversal
& TofuEval
& \texttt{000465}
& The sentence states that the United States may pursue military action
regardless of the outcome of the meetings. The source instead says that one
more attempt to restore inspections would occur before military action.
The sentence therefore reverses an important condition in the source. \\

Proposal stated as fact
& TofuEval
& \texttt{003709}
& The sentence states that the city will provide a 10.4-acre parcel for
housing. The source only reports a recommendation to approve a term sheet.
The sentence turns a pending proposal into a settled fact. \\

\hline
\multicolumn{4}{c}{\textbf{MSumBench}} \\
\hline

Unsupported attribute
& MSumBench
& \texttt{000936}
& The sentence describes PLS as a ``rare autosomal recessive disorder.''
The source supports that PLS is autosomal recessive but never states that
it is rare. The added attribute is therefore unsupported. \\

Missing qualification
& MSumBench
& \texttt{001228}
& The sentence presents the reported concentrations as generally applicable.
In the source, these concentrations apply specifically to the Bactec 460 TB
and MGIT 960 systems, while other systems use different concentrations.
Removing this condition makes the claim unfaithful. \\

Overstatement
& MSumBench
& \texttt{004012}
& The sentence states that the resolution passed with unanimous support.
The source only says that ``the ayes have it,'' which establishes majority
approval but not unanimity. \\

Numerical distortion
& MSumBench
& \texttt{003823}
& The sentence claims that sales-tax collection increased by 40\%.
The source says that collection is currently at 40\%, compared with 32\%
at the same point in the previous year. These statements have different
meanings. \\

Hallucinated content
& MSumBench
& \texttt{001278}
& The sentence introduces mass spectrometry, neuropeptides, and three named
organs. None of these appears in the source, which instead discusses the
stomatogastric ganglion and cardiac performance under acute temperature.
Most of the sentence is therefore unsupported. \\

\hline
\end{tabular}

\caption{
Manually confirmed subclaim-level errors in sentences whose original human
faithfulness labels are positive. All five LLM judges flag at least one unsupported subclaim in each case. 
}
\label{tab:subclaim-error-cases}
\end{table*}

%% file: tables/full_aggrefact_cnn.tex
\begin{longtable}{@{}llcccc@{}}
\toprule
Model & Aggregation & Pearson $r$ & Spearman $\rho$ & Kendall $\tau$ & ROC-AUC \\
\midrule
\endfirsthead

\multicolumn{6}{l}{\small\textit{Table~\thetable\ continued from previous page.}}\\
\toprule
Model & Aggregation & Pearson $r$ & Spearman $\rho$ & Kendall $\tau$ & ROC-AUC \\
\midrule
\endhead

\midrule
\multicolumn{6}{r}{\small\textit{(continued on next page)}}\\
\endfoot

\caption{Full results on AggreFact-CNN. Per-sentence scoring is either hard binary or soft logit-derived probability. Cells marked with \ns{} have a 95\% bootstrap CI that includes the null value (0 for Pearson, Spearman, Kendall; 0.5 for ROC-AUC).}
\label{tab:full-aggrefact-cnn}\\
\endlastfoot

\multicolumn{6}{c}{\textbf{Hard sentence labels}} \\
\midrule

\multirow{3}{*}{Qwen3-4B}
  & min & +0.224 & +0.224 & +0.224 & +0.572 \\
  & mean & +0.244 & +0.225 & +0.223 & +0.573 \\
  & max & +0.087 & +0.087 & +0.087 & +0.504\ns \\
\cmidrule(l){2-6}
\multirow{3}{*}{Qwen3-8B}
  & min & +0.274 & +0.274 & +0.274 & +0.629 \\
  & mean & +0.272 & +0.277 & +0.272 & +0.631 \\
  & max & +0.087 & +0.087 & +0.087 & +0.504\ns \\
\cmidrule(l){2-6}
\multirow{3}{*}{Qwen3-32B}
  & min & +0.386 & +0.386 & +0.386 & +0.673 \\
  & mean & +0.409 & +0.392 & +0.386 & +0.676 \\
  & max & +0.087 & +0.087 & +0.087 & +0.504\ns \\
\cmidrule(l){2-6}
\multirow{3}{*}{Llama-3.1-8B}
  & min & +0.250 & +0.250 & +0.250 & +0.627 \\
  & mean & +0.234 & +0.250 & +0.244 & +0.627 \\
  & max & -0.011 & -0.011 & -0.011 & +0.499\ns \\
\cmidrule(l){2-6}
\multirow{3}{*}{OLMo-3-7B}
  & min & +0.197 & +0.197 & +0.197 & +0.650 \\
  & mean & +0.211 & +0.205 & +0.191 & +0.661 \\
  & max & +0.104 & +0.104 & +0.104 & +0.518 \\
\midrule

\multicolumn{6}{c}{\textbf{Soft sentence probabilities}} \\
\midrule

\multirow{3}{*}{Qwen3-4B}
  & min & +0.239 & +0.320 & +0.266 & +0.791 \\
  & mean & +0.257 & +0.320 & +0.262 & +0.791 \\
  & max & +0.087\ns & +0.046\ns & +0.038\ns & +0.541\ns \\
\cmidrule(l){2-6}
\multirow{3}{*}{Qwen3-8B}
  & min & +0.280 & +0.304 & +0.250 & +0.776 \\
  & mean & +0.281 & +0.303 & +0.247 & +0.775 \\
  & max & +0.086\ns & +0.049\ns & +0.041\ns & +0.545\ns \\
\cmidrule(l){2-6}
\multirow{3}{*}{Qwen3-32B}
  & min & +0.405 & +0.328 & +0.269 & +0.798 \\
  & mean & +0.426 & +0.327 & +0.267 & +0.797 \\
  & max & +0.107 & +0.132 & +0.109 & +0.620 \\
\cmidrule(l){2-6}
\multirow{3}{*}{Llama-3.1-8B}
  & min & +0.276 & +0.237 & +0.194 & +0.716 \\
  & mean & +0.264 & +0.233 & +0.191 & +0.712 \\
  & max & +0.075\ns & +0.090 & +0.073 & +0.582 \\
\cmidrule(l){2-6}
\multirow{3}{*}{OLMo-3-7B}
  & min & +0.233 & +0.230 & +0.188 & +0.709 \\
  & mean & +0.238 & +0.233 & +0.190 & +0.711 \\
  & max & +0.159 & +0.159 & +0.130 & +0.644 \\
\midrule

\end{longtable}

%% file: tables/full_aggrefact_other.tex
\begin{longtable}{@{}llcccc@{}}
\toprule
Model & Aggregation & Pearson $r$ & Spearman $\rho$ & Kendall $\tau$ & ROC-AUC \\
\midrule
\endfirsthead

\multicolumn{6}{l}{\small\textit{Table~\thetable\ continued from previous page.}}\\
\toprule
Model & Aggregation & Pearson $r$ & Spearman $\rho$ & Kendall $\tau$ & ROC-AUC \\
\midrule
\endhead

\midrule
\multicolumn{6}{r}{\small\textit{(continued on next page)}}\\
\endfoot

\caption{Full results on AggreFact-Other Per-sentence scoring is either hard binary or soft logit-derived probability. Cells marked with \ns{} have a 95\% bootstrap CI that includes the null value (0 for Pearson, Spearman, Kendall; 0.5 for ROC-AUC).}
\label{tab:full-aggrefact-other-multi}\\
\endlastfoot

\multicolumn{6}{c}{\textbf{Hard sentence labels}} \\
\midrule

\multirow{3}{*}{Qwen3-4B}
  & min & +0.070\ns & +0.070\ns & +0.070\ns & +0.540\ns \\
  & mean & +0.180 & +0.177 & +0.165 & +0.626 \\
  & max & +0.232 & +0.232 & +0.232 & +0.633 \\
\cmidrule(l){2-6}
\multirow{3}{*}{Qwen3-8B}
  & min & +0.106 & +0.106 & +0.106 & +0.557 \\
  & mean & +0.208 & +0.209 & +0.195 & +0.649 \\
  & max & +0.233 & +0.233 & +0.233 & +0.644 \\
\cmidrule(l){2-6}
\multirow{3}{*}{Qwen3-32B}
  & min & +0.188 & +0.188 & +0.188 & +0.611 \\
  & mean & +0.235 & +0.236 & +0.220 & +0.670 \\
  & max & +0.213 & +0.213 & +0.213 & +0.627 \\
\cmidrule(l){2-6}
\multirow{3}{*}{Llama-3.1-8B}
  & min & +0.139 & +0.139 & +0.139 & +0.585 \\
  & mean & +0.170 & +0.152 & +0.143 & +0.607 \\
  & max & +0.166 & +0.166 & +0.166 & +0.581 \\
\cmidrule(l){2-6}
\multirow{3}{*}{OLMo-3-7B}
  & min & +0.148 & +0.148 & +0.148 & +0.587 \\
  & mean & +0.183 & +0.188 & +0.175 & +0.636 \\
  & max & +0.159 & +0.159 & +0.159 & +0.598 \\
\midrule

\multicolumn{6}{c}{\textbf{Soft sentence probabilities}} \\
\midrule

\multirow{3}{*}{Qwen3-4B}
  & min & +0.073 & +0.210 & +0.172 & +0.660 \\
  & mean & +0.181 & +0.208 & +0.170 & +0.658 \\
  & max & +0.232 & +0.259 & +0.213 & +0.698 \\
\cmidrule(l){2-6}
\multirow{3}{*}{Qwen3-8B}
  & min & +0.118 & +0.235 & +0.193 & +0.679 \\
  & mean & +0.215 & +0.250 & +0.204 & +0.690 \\
  & max & +0.240 & +0.285 & +0.233 & +0.717 \\
\cmidrule(l){2-6}
\multirow{3}{*}{Qwen3-32B}
  & min & +0.212 & +0.288 & +0.236 & +0.719 \\
  & mean & +0.264 & +0.274 & +0.224 & +0.709 \\
  & max & +0.256 & +0.276 & +0.226 & +0.710 \\
\cmidrule(l){2-6}
\multirow{3}{*}{Llama-3.1-8B}
  & min & +0.207 & +0.203 & +0.166 & +0.654 \\
  & mean & +0.225 & +0.194 & +0.158 & +0.648 \\
  & max & +0.206 & +0.166 & +0.136 & +0.627 \\
\cmidrule(l){2-6}
\multirow{3}{*}{OLMo-3-7B}
  & min & +0.201 & +0.258 & +0.211 & +0.697 \\
  & mean & +0.221 & +0.226 & +0.185 & +0.672 \\
  & max & +0.195 & +0.184 & +0.151 & +0.640 \\
\midrule

\end{longtable}

%% file: tables/full_diversumm.tex
\begin{longtable}{@{}llcccc@{}}
\toprule
Model & Aggregation & Pearson $r$ & Spearman $\rho$ & Kendall $\tau$ & ROC-AUC \\
\midrule
\endfirsthead

\multicolumn{6}{l}{\small\textit{Table~\thetable\ continued from previous page.}}\\
\toprule
Model & Aggregation & Pearson $r$ & Spearman $\rho$ & Kendall $\tau$ & ROC-AUC \\
\midrule
\endhead

\midrule
\multicolumn{6}{r}{\small\textit{(continued on next page)}}\\
\endfoot

\caption{Full results on Diversumm. Per-sentence scoring is either hard binary or soft logit-derived probability. Cells marked with \ns{} have a 95\% bootstrap CI that includes the null value (0 for Pearson, Spearman, Kendall; 0.5 for ROC-AUC).}
\label{tab:full-diversumm}\\
\endlastfoot

\multicolumn{6}{c}{\textbf{Hard sentence labels}} \\
\midrule

\multirow{3}{*}{Qwen3-4B}
  & min & +0.458 & +0.458 & +0.458 & +0.744 \\
  & mean & +0.409 & +0.457 & +0.404 & +0.764 \\
  & max & +0.071 & +0.071 & +0.071 & +0.508 \\
\cmidrule(l){2-6}
\multirow{3}{*}{Qwen3-8B}
  & min & +0.384 & +0.384 & +0.384 & +0.694 \\
  & mean & +0.413 & +0.435 & +0.374 & +0.762 \\
  & max & +0.124 & +0.124 & +0.124 & +0.523 \\
\cmidrule(l){2-6}
\multirow{3}{*}{Qwen3-32B}
  & min & +0.399 & +0.399 & +0.399 & +0.702 \\
  & mean & +0.406 & +0.436 & +0.375 & +0.763 \\
  & max & +0.090 & +0.090 & +0.090 & +0.512 \\
\cmidrule(l){2-6}
\multirow{3}{*}{Llama-3.1-8B}
  & min & +0.334 & +0.334 & +0.334 & +0.672 \\
  & mean & +0.276 & +0.335 & +0.306 & +0.679 \\
  & max & +0.044 & +0.044 & +0.044 & +0.503\ns \\
\cmidrule(l){2-6}
\multirow{3}{*}{OLMo-3-7B}
  & min & +0.153 & +0.153 & +0.153 & +0.565 \\
  & mean & +0.290 & +0.293 & +0.247 & +0.680 \\
  & max & +0.083 & +0.083 & +0.083 & +0.518 \\
\midrule

\multicolumn{6}{c}{\textbf{Soft sentence probabilities}} \\
\midrule

\multirow{3}{*}{Qwen3-4B}
  & min & +0.468 & +0.463 & +0.380 & +0.785 \\
  & mean & +0.415 & +0.464 & +0.379 & +0.785 \\
  & max & +0.069 & +0.016\ns & +0.013\ns & +0.510\ns \\
\cmidrule(l){2-6}
\multirow{3}{*}{Qwen3-8B}
  & min & +0.391 & +0.432 & +0.354 & +0.766 \\
  & mean & +0.421 & +0.439 & +0.358 & +0.770 \\
  & max & +0.132 & +0.204 & +0.170 & +0.626 \\
\cmidrule(l){2-6}
\multirow{3}{*}{Qwen3-32B}
  & min & +0.426 & +0.463 & +0.379 & +0.785 \\
  & mean & +0.418 & +0.437 & +0.357 & +0.769 \\
  & max & +0.125 & -0.035\ns & -0.029\ns & +0.479\ns \\
\cmidrule(l){2-6}
\multirow{3}{*}{Llama-3.1-8B}
  & min & +0.400 & +0.409 & +0.334 & +0.751 \\
  & mean & +0.319 & +0.377 & +0.308 & +0.732 \\
  & max & +0.074 & -0.113 & -0.092 & +0.431 \\
\cmidrule(l){2-6}
\multirow{3}{*}{OLMo-3-7B}
  & min & +0.268 & +0.376 & +0.308 & +0.731 \\
  & mean & +0.316 & +0.324 & +0.265 & +0.699 \\
  & max & +0.098 & +0.016\ns & +0.013\ns & +0.510\ns \\
\midrule

\end{longtable}

%% file: tables/full_halueval.tex
\begin{longtable}{@{}llcccc@{}}
\toprule
Model & Aggregation & Pearson $r$ & Spearman $\rho$ & Kendall $\tau$ & ROC-AUC \\
\midrule
\endfirsthead

\multicolumn{6}{l}{\small\textit{Table~\thetable\ continued from previous page.}}\\
\toprule
Model & Aggregation & Pearson $r$ & Spearman $\rho$ & Kendall $\tau$ & ROC-AUC \\
\midrule
\endhead

\midrule
\multicolumn{6}{r}{\small\textit{(continued on next page)}}\\
\endfoot

\caption{Full results on HaluEval-summarization. Per-sentence scoring is either hard binary or soft logit-derived probability. Cells marked with \ns{} have a 95\% bootstrap CI that includes the null value (0 for Pearson, Spearman, Kendall; 0.5 for ROC-AUC).}
\label{tab:full-halueval}\\
\endlastfoot

\multicolumn{6}{c}{\textbf{Hard sentence labels}} \\
\midrule

\multirow{3}{*}{Qwen3-4B}
  & min & +0.256 & +0.256 & +0.256 & +0.620 \\
  & mean & +0.306 & +0.288 & +0.269 & +0.638 \\
  & max & +0.163 & +0.163 & +0.163 & +0.528 \\
\cmidrule(l){2-6}
\multirow{3}{*}{Qwen3-8B}
  & min & +0.311 & +0.311 & +0.311 & +0.655 \\
  & mean & +0.396 & +0.382 & +0.344 & +0.707 \\
  & max & +0.241 & +0.241 & +0.241 & +0.560 \\
\cmidrule(l){2-6}
\multirow{3}{*}{Qwen3-32B}
  & min & +0.394 & +0.394 & +0.394 & +0.697 \\
  & mean & +0.472 & +0.469 & +0.421 & +0.755 \\
  & max & +0.276 & +0.276 & +0.276 & +0.575 \\
\cmidrule(l){2-6}
\multirow{3}{*}{Llama-3.1-8B}
  & min & +0.309 & +0.309 & +0.309 & +0.655 \\
  & mean & +0.400 & +0.383 & +0.345 & +0.707 \\
  & max & +0.248 & +0.248 & +0.248 & +0.560 \\
\cmidrule(l){2-6}
\multirow{3}{*}{OLMo-3-7B}
  & min & +0.138 & +0.138 & +0.138 & +0.561 \\
  & mean & +0.258 & +0.246 & +0.214 & +0.640 \\
  & max & +0.220 & +0.220 & +0.220 & +0.564 \\
\midrule

\multicolumn{6}{c}{\textbf{Soft sentence probabilities}} \\
\midrule

\multirow{3}{*}{Qwen3-4B}
  & min & +0.263 & +0.314 & +0.257 & +0.681 \\
  & mean & +0.313 & +0.347 & +0.283 & +0.700 \\
  & max & +0.168 & +0.413 & +0.342 & +0.739 \\
\cmidrule(l){2-6}
\multirow{3}{*}{Qwen3-8B}
  & min & +0.323 & +0.367 & +0.301 & +0.712 \\
  & mean & +0.404 & +0.408 & +0.333 & +0.736 \\
  & max & +0.249 & +0.399 & +0.329 & +0.730 \\
\cmidrule(l){2-6}
\multirow{3}{*}{Qwen3-32B}
  & min & +0.426 & +0.465 & +0.380 & +0.769 \\
  & mean & +0.490 & +0.491 & +0.401 & +0.783 \\
  & max & +0.294 & +0.414 & +0.339 & +0.739 \\
\cmidrule(l){2-6}
\multirow{3}{*}{Llama-3.1-8B}
  & min & +0.369 & +0.385 & +0.314 & +0.722 \\
  & mean & +0.435 & +0.416 & +0.340 & +0.740 \\
  & max & +0.304 & +0.339 & +0.277 & +0.695 \\
\cmidrule(l){2-6}
\multirow{3}{*}{OLMo-3-7B}
  & min & +0.181 & +0.234 & +0.191 & +0.635 \\
  & mean & +0.290 & +0.272 & +0.222 & +0.657 \\
  & max & +0.261 & +0.222 & +0.182 & +0.628 \\
\midrule

\end{longtable}

%% file: tables/epsilon_long.tex
\begin{longtable}{@{}llcc@{}}
\toprule
Judge & Benchmark & $\hat\epsilon$ \,[95\% CI] & $\epsilon_0$ \\
\midrule
\endfirsthead

\multicolumn{4}{l}{\small\textit{Table~\thetable\ continued from previous page.}}\\
\toprule
Judge & Benchmark & $\hat\epsilon$ \,[95\% CI] & $\epsilon_0$ \\
\midrule
\endhead

\midrule
\multicolumn{4}{r}{\small\textit{(continued on next page)}}\\
\endfoot

\caption{Per-(judge, benchmark) sentence-level error rate $\hat\epsilon$ with 95\% bootstrap CIs (resampling whole annotated summaries), and the Theorem-1 threshold $\epsilon_0$. For DiverSumm and HaluEval, $\rho(Y,\min)>\rho(Y,\mathrm{mean})$ for every $\epsilon\in(0,\tfrac12)$, so the threshold does not bind.}\label{tab:eps-long}\\
\endlastfoot

Qwen3-4B & AggreFact-CNN & 0.134\,[0.05,\,0.23] & 0.175 \\
Qwen3-4B & AggreFact-Other & 0.179\,[0.04,\,0.33] & 0.322 \\
Qwen3-4B & DiverSumm & 0.230\,[0.17,\,0.31] & $\geq$0.5 \\
Qwen3-4B & HaluEval & 0.211\,[0.13,\,0.30] & $\geq$0.5 \\
\midrule
Qwen3-8B & AggreFact-CNN & 0.134\,[0.05,\,0.23] & 0.175 \\
Qwen3-8B & AggreFact-Other & 0.214\,[0.08,\,0.35] & 0.322 \\
Qwen3-8B & DiverSumm & 0.133\,[0.09,\,0.19] & $\geq$0.5 \\
Qwen3-8B & HaluEval & 0.133\,[0.06,\,0.21] & $\geq$0.5 \\
\midrule
Qwen3-32B & AggreFact-CNN & 0.122\,[0.04,\,0.22] & 0.175 \\
Qwen3-32B & AggreFact-Other & 0.143\,[0.04,\,0.27] & 0.322 \\
Qwen3-32B & DiverSumm & 0.137\,[0.09,\,0.20] & $\geq$0.5 \\
Qwen3-32B & HaluEval & 0.100\,[0.04,\,0.17] & $\geq$0.5 \\
\midrule
Llama-3.1-8B & AggreFact-CNN & 0.159\,[0.07,\,0.26] & 0.175 \\
Llama-3.1-8B & AggreFact-Other & 0.286\,[0.11,\,0.45] & 0.322 \\
Llama-3.1-8B & DiverSumm & 0.279\,[0.22,\,0.35] & $\geq$0.5 \\
Llama-3.1-8B & HaluEval & 0.200\,[0.11,\,0.29] & $\geq$0.5 \\
\midrule
OLMo-3-7B & AggreFact-CNN & 0.159\,[0.09,\,0.24] & 0.175 \\
OLMo-3-7B & AggreFact-Other & 0.179\,[0.04,\,0.32] & 0.322 \\
OLMo-3-7B & DiverSumm & 0.230\,[0.16,\,0.31] & $\geq$0.5 \\
OLMo-3-7B & HaluEval & 0.244\,[0.17,\,0.33] & $\geq$0.5 \\
\bottomrule
\end{longtable}

%% file: tables/human_guideline_part1.tex
\begin{table*}[t]
\centering
\small
\begin{tabular}{p{0.16\linewidth}|p{0.19\linewidth}p{0.58\linewidth}}
\hline
\textbf{Benchmark} & \textbf{Dataset Name} & \textbf{Original annotation and binary label conversion} \\
\hline

AggreFact-CNN &
CLIFF~\cite{cao-wang-2021-cliff} &
The original dataset provides word/span-level annotations for intrinsic and extrinsic factual errors. The benchmark converts these local annotations into a binary sample label: a summary is inconsistent if it contains any annotated factual-error span, and consistent otherwise. \\

AggreFact-CNN &
FactCC~\cite{kryscinski-etal-2020-factcc}&
The original dataset annotates document-sentence pairs for factual consistency. The released binary consistency label is used directly as the benchmark sample label. \\

AggreFact-CNN &
Wang'20~\cite{wang-etal-2020-qags}
&
The original dataset collects sentence-level factual-consistency judgments. Later, the benchmark converts this into a strict sample label: a summary is consistent only if all of its sentences are consistent. \\


AggreFact-CNN &
SummEval~\cite{fabbri-etal-2021-summeval} &
The original dataset uses holistic 1--5 Likert scores for consistency, coherence, fluency, and relevance. SummaC converts the consistency scores into a binary label by treating a summary as consistent only if all annotators give the 5 consistency score; otherwise, it is inconsistent. AggreFact directly uses this binary label. \\

AggreFact-CNN &
Polytope~\cite{polytope} &
The original dataset uses a multidimensional error typology and severity levels. The benchmark converts these annotations into a binary factual-consistency label: a summary is inconsistent if it contains any selected factual/accuracy error type, and consistent otherwise. \\

AggreFact-CNN &
FRANK~\citep{frank} &
The original dataset provides fine-grained factual-error annotations for summaries from CNN/DailyMail and XSum. SummaC converts these annotations into a binary label by treating a summary as consistent if a majority of annotators mark it as containing no factual error. AggreFact directly uses this binary label. \\

AggreFact-CNN &
Goyal'21~\citep{goyal-durrett-2021-annotating} &
The original dataset provides span-level fine-grained factuality annotations. The benchmark converts the span-level labels into a binary sample label using an any-error rule: a summary is inconsistent if it contains annotated factual-error spans, and consistent otherwise. \\


\hline

AggreFact-Other&
TofuEval~\citep{tang-etal-2024-tofueval} &
The original dataset collects sentence-level factual-consistency labels for dialogue summaries and explicitly defines a summary as factually consistent only if all of its sentences are factually consistent. AggreFact-Other keeps the binary label. \\

AggreFact-Other&
WiCE~\citep{kamoi-etal-2023-wice} &
The original dataset provides claim/subclaim-level entailment annotations. A claim is supported only when all required subclaims are supported by the evidence. AggreFact-Other maps fully supported claims to positive labels and unsupported or partially supported claims to negative labels. \\

AggreFact-Other&
REVEAL~\citep{reveal2024} &
The original dataset annotates reasoning steps or claims for attribution. A step is fully attributable only if all information in the claim is correct according to the source. AggreFact-Other maps fully attributable claims to positive labels and other attribution labels to negative labels. \\

AggreFact-Other&
ClaimVerify~\citep{claimverify2023} &
The original dataset evaluates whether generated statements are fully supported by their cited evidence. AggreFact-Other keeps the original binary support label, where fully supported statements are positive and unsupported statements are negative. \\

AggreFact-Other&
FactCheck~\citep{factcheckgpt2023} &
The original dataset decomposes responses into atomic statements and verifies each statement independently with labels: support, partial support, refute, and irrelevant. In LLM-AggreFact's binary unification, only fully supported labels are mapped to positive; others are mapped to negative. \\

AggreFact-Other&
ExpertQA~\citep{malaviya-etal-2024-expertqa} &
The original dataset asks experts to annotate claim-level attribution and factuality. In particular, factuality asks whether every word of the claim is factually correct. AggreFact-Other maps complete or fully supported claims to positive labels and incomplete, partial, missing, or unsupported claims to negative labels. \\

AggreFact-Other&
LFQA~\citep{lfqa2023} &
The original dataset labels each answer sentence as supported, partially supported, or not supported. For binary evaluation, supported sentences are treated as positive, while partially supported and not-supported sentences are treated as negative. \\

AggreFact-Other&
RAGTruth~\citep{niu-etal-2024-ragtruth} &
The original dataset provides both response-level hallucination labels and word/span-level hallucination annotations. AggreFact-Other reformats the data into document--claim label examples and maps supported or non-hallucinated claims to positive labels, while hallucinated or unsupported claims are mapped to negative labels. \\

\hline
\end{tabular}
\caption{Original annotation forms and binary label conversion rules for Aggrefact-CNN and AggreFact-Other used in our experiments. The table separates the original annotation granularity from the later benchmark-level binary labels.}
\label{tab:dataset_annotation_1}
\end{table*}

%% file: tables/human_guideline_part2.tex
\begin{table*}[t]
\centering
\small
\begin{tabular}{p{0.16\linewidth}|p{0.19\linewidth}p{0.58\linewidth}}
\hline
\textbf{Benchmark} & \textbf{Dataset Name} & \textbf{Original annotation and binary label conversion} \\
\hline

\hline

DiverSumm &
ChemSumm~\citep{chemsum} &
The original data uses expert faithfulness annotations for generated scientific abstracts, including fine-grained error-style judgments. DiverSumm treats the annotations as numerical faithfulness scores between 0 and 1 and binarizes them by labeling a summary faithful only if the majority of annotators assign score 1; otherwise it is unfaithful. \\

DiverSumm &
MultiNews~\citep{mutinews} / QMSUM~\citep{qmsum}&

\citet{unisumm} uses holistic 1--5 human scores for dimensions such as fluency, consistency, coherence, and relevance. DiverSumm uses the consistency dimension and labels a summary faithful only when the consistency score is 5; all lower consistency scores are treated as unfaithful. \\

DiverSumm &
ArXiv~\citep{arXiv} / GovReport~\citep{govreport} &
\citet{Koh} computes factual consistency as the percentage of factually consistent summary sentences, averaged over annotators, which is average-like at the summary level. DiverSumm binarizes this score by labeling a summary faithful only if the majority of annotators assign score 1, i.e., fully consistent; otherwise it is unfaithful. \\

\hline
HaluEval~\citep{li-etal-2023-halueval} & HaluEval  &
The dataset is constructed from CNN/DailyMail document--summary pairs, where the reference summary is treated as the faithful summary and ChatGPT is prompted to generate a plausible but factually incorrect hallucinated summary. However, prior work has questioned the quality and factual consistency of CNN/DailyMail references \citep{fabbri-etal-2021-summeval,guo-etal-2022-questioning}. In particular, \citet{fabbri-etal-2021-summeval} observe that some CNN/DailyMail references contain local artifacts near the end of the reference, such as hyperlinks and clickbait-style descriptions of other articles, which are not supported by the source document. Therefore, HaluEval is useful as a controlled hallucination-recognition benchmark. In our setting, it can test whether our method detects constructed hallucinated summaries and whether it is sensitive to possible unsupported artifacts in the assumed faithful summaries. \\

\hline
\end{tabular}
\caption{Original annotation forms and binary label conversion rules for Diversum and HaluEval used in our experiments. The table separates the original annotation granularity from the later benchmark-level binary labels.}
\label{tab:dataset_annotation_2}
\end{table*}

%% file: tables/error_case.tex
\begin{table*}[t!]
\centering
\scriptsize
\setlength{\tabcolsep}{3pt}
\renewcommand{\arraystretch}{1.15}
\begin{tabular}{p{0.18\textwidth}p{0.13\textwidth}p{0.08\textwidth}p{0.53\textwidth}}
\hline
Error type & Dataset name & Sample id & Judgment\\
\hline
\multicolumn{4}{c}{\textbf{AggreFact-CNN}}\\
\hline
Boilerplate leakage   & CLIFF     & 318:s4 & \texttt{CLICK HERE for all the latest Manchester United news.} is webpage navigation, not article content.\\
Event distortion      & SummEval  & 168:s1 & The summary claims the Foxes won back-to-back against Arsenal and West Brom; the source only supports West Brom (the other reference is to Swansea/general ``back-to-back wins'').\\
Number error          & SummEval  & 105:s0 & The summary says Barcelona are six points clear; the source says the manager is six \emph{games} away from winning the title, not points.\\
\hline
\multicolumn{4}{c}{\textbf{AggreFact-Other}}\\
\hline
Entity hallucination       & ExpertQA & 7757:s2  & The summary credits \emph{Saffron Edge} for cloud services and adds a \texttt{(Passage ID 3)} citation; the source (about Miami-Dade in-house cloud) does not mention Saffron Edge.\\
Hallucinated attribution   & ExpertQA & 8262:s0  & The summary attributes Andalusian-Spanish sociolinguistic research to \emph{``Juan Andr\'es Villena Ponsoda''}; the source abstract does not name this researcher.\\
Unsupported procedure      & ExpertQA & 7801:s1  & The summary prescribes \texttt{git checkout -b} after reverting to a previous commit; the source passages discuss reverting but do not give this procedure.\\
Unsupported inference      & RAGTruth & 15949:s0 & The summary asserts that a Skype username \emph{``can be used to find their IP address''}; the passages explain how to locate a username on a profile but never connect username to IP.\\
\hline
\multicolumn{4}{c}{\textbf{DiverSumm}}\\
\hline
Boilerplate leakage   & MultiNews & 107:s9 & \texttt{click for her full column, or to read her full interview with sherman.} is web boilerplate, not summary content.\\
Reference boilerplate & GovReport & 420:s9 & \texttt{For a discussion of the process for creating a new National Park System units \ldots see CRS Report RS20158} is a CRS cross-reference, not a substantive claim.\\
Reference boilerplate & GovReport & 419:s9 & \texttt{This report will be updated as events warrant.} is CRS administrative boilerplate, repeated verbatim in another summary (420:s11).\\
Unsupported claim     & QMSum     & 207:s3 & The summary states ``screen size and speech recognition are also important features''; the meeting transcript discusses the remote control's functional design without these features.\\
\hline
\multicolumn{4}{c}{\textbf{HaluEval}}\\
\hline
Attribution distortion   & HaluEval & 234:s0 & The summary frames the article as a third-person quote (\texttt{say CNN's Carol Costello}); the source is a first-person opinion piece \emph{written by} Costello.\\
Paraphrase distortion    & HaluEval & 244:s2 & The summary asserts \emph{``Clinton needs to convince voters of her authenticity, Zelizer says''}; the source by Zelizer discusses her strengths and challenges but does not single out authenticity in those words.\\
Quoted speech fabricated & HaluEval & 302:s1 & \texttt{Danny Cevallos: Why not let people be found via social media?} attributes a rhetorical question to Cevallos that is not in the source op-ed.\\
Unsupported event detail & HaluEval & 50:s0  & The summary specifies ``a South Carolina man \ldots 66 days alone at sea''; the source intro discusses lost-at-sea stories in general and does not anchor this entity or duration.\\
\hline
\end{tabular}
\caption{Representative manual-review errors on suspicious summaries
(\Figure~\ref{fig:pies}). Sample ids use the short \texttt{doc:sent} form;
the full identifiers and source/summary text are in the released
annotations.}
\label{tab:manual-error-cases}
\end{table*}

%% file: tables/prompt.tex

\begin{table*}[t]
\centering
\small
\begin{tabular}{p{0.18\textwidth}p{0.76\textwidth}}
\toprule
\textbf{Component} & \textbf{Prompt content} \\
\midrule
\textbf{Role} &
You are a strict faithfulness evaluator for text summarization. \\

\textbf{Input} &
You are given: (1) a source document, and (2) a single sentence taken from a candidate summary of that document. \\

\textbf{Task} &
Decide whether the single sentence is faithful to the source document. A sentence is faithful only if every factual claim it makes is directly supported by, or trivially entailed by, information in the source document. If the sentence introduces information not present in the source, contradicts the source, or makes unsupported inferences, it is not faithful. \\

\textbf{Guidelines} &
Stylistic paraphrasing is acceptable; only factual content matters. If the sentence is purely a discourse connective with no factual content, e.g., ``In summary,'' label it faithful with low confidence. Be conservative: when in doubt, label the sentence as not faithful. \\

\textbf{Input format} &
\texttt{SOURCE DOCUMENT:} \newline
\texttt{<\!<\!<} \newline
\texttt{\{document\}} \newline
\texttt{>\!>\!>} \newline
\texttt{SUMMARY SENTENCE:} \newline
\texttt{<\!<\!<} \newline
\texttt{\{sentence\}} \newline
\texttt{>\!>\!>} \\

\textbf{Output format} &
Respond with strict JSON on a single line, with no Markdown or commentary: \newline
\texttt{\{"faithful": 0 or 1\}} \\
\bottomrule
\end{tabular}
\caption{Prompt used for sentence-level faithfulness evaluation.}
\label{tab:sentence-faithfulness-prompt}
\end{table*}

%% file: paper.bbl
\begin{thebibliography}{55}
\providecommand{\natexlab}[1]{#1}

\bibitem[{Adams et~al.(2023)Adams, Nguyen, Smith, Xia, Xie, Ostropolets, Deb, Chen, Naumann, and Elhadad}]{chemsum}
Griffin Adams, Bichlien Nguyen, Jake Smith, Yingce Xia, Shufang Xie, Anna Ostropolets, Budhaditya Deb, Yuan{-}Jyue Chen, Tristan Naumann, and No{\'{e}}mie Elhadad. 2023.
\newblock \href {https://doi.org/10.18653/V1/2023.ACL-LONG.587} {What are the desired characteristics of calibration sets? identifying correlates on long form scientific summarization}.
\newblock In \emph{Proceedings of the 61st Annual Meeting of the Association for Computational Linguistics (Volume 1: Long Papers), {ACL} 2023, Toronto, Canada, July 9-14, 2023}, pages 10520--10542. Association for Computational Linguistics.

\bibitem[{Bao et~al.(2025)Bao, Li, Qu, Luo, Wan, Tang, Fan, Tamber, Kazi, Sourabh, Qi, Tu, Xu, Gonzales, Mendelevitch, and Ahmad}]{FaithBench}
Forrest~Sheng Bao, Miaoran Li, Renyi Qu, Ge~Luo, Erana Wan, Yujia Tang, Weisi Fan, Manveer~Singh Tamber, Suleman Kazi, Vivek Sourabh, Mike Qi, Ruixuan Tu, Chenyu Xu, Matthew Gonzales, Ofer Mendelevitch, and Amin Ahmad. 2025.
\newblock \href {https://doi.org/10.18653/V1/2025.NAACL-SHORT.38} {Faithbench: {A} diverse hallucination benchmark for summarization by modern llms}.
\newblock In \emph{Proceedings of the 2025 Conference of the Nations of the Americas Chapter of the Association for Computational Linguistics: Human Language Technologies, {NAACL} 2025 - Volume 2: Short Papers, Albuquerque, New Mexico, April 29 - May 4, 2025}, pages 448--461. Association for Computational Linguistics.

\bibitem[{Cao and Wang(2021)}]{cao-wang-2021-cliff}
Shuyang Cao and Lu~Wang. 2021.
\newblock \href {https://doi.org/10.18653/v1/2021.emnlp-main.532} {{CLIFF}: Contrastive learning for improving faithfulness and factuality in abstractive summarization}.
\newblock In \emph{Proceedings of the 2021 Conference on Empirical Methods in Natural Language Processing}, pages 6633--6649, Online and Punta Cana, Dominican Republic. Association for Computational Linguistics.

\bibitem[{Chen et~al.(2023{\natexlab{a}})Chen, Xu, Arora, and Choi}]{lfqa2023}
Hung{-}Ting Chen, Fangyuan Xu, Shane~A. Arora, and Eunsol Choi. 2023{\natexlab{a}}.
\newblock \href {https://doi.org/10.48550/ARXIV.2310.12150} {Understanding retrieval augmentation for long-form question answering}.
\newblock \emph{CoRR}, abs/2310.12150.

\bibitem[{Chen et~al.(2023{\natexlab{b}})Chen, Liu, Xu, Yang, Zhu, Zeng, and Zhang}]{unisumm}
Yulong Chen, Yang Liu, Ruochen Xu, Ziyi Yang, Chenguang Zhu, Michael Zeng, and Yue Zhang. 2023{\natexlab{b}}.
\newblock \href {https://doi.org/10.18653/V1/2023.ACL-LONG.718} {Unisumm and summzoo: Unified model and diverse benchmark for few-shot summarization}.
\newblock In \emph{Proceedings of the 61st Annual Meeting of the Association for Computational Linguistics (Volume 1: Long Papers), {ACL} 2023, Toronto, Canada, July 9-14, 2023}, pages 12833--12855. Association for Computational Linguistics.

\bibitem[{Cohan et~al.(2018)Cohan, Dernoncourt, Kim, Bui, Kim, Chang, and Goharian}]{arXiv}
Arman Cohan, Franck Dernoncourt, Doo~Soon Kim, Trung Bui, Seokhwan Kim, Walter Chang, and Nazli Goharian. 2018.
\newblock \href {https://doi.org/10.18653/V1/N18-2097} {A discourse-aware attention model for abstractive summarization of long documents}.
\newblock In \emph{Proceedings of the 2018 Conference of the North American Chapter of the Association for Computational Linguistics: Human Language Technologies, NAACL-HLT, New Orleans, Louisiana, USA, June 1-6, 2018, Volume 2 (Short Papers)}, pages 615--621. Association for Computational Linguistics.

\bibitem[{ES et~al.(2024)ES, James, Anke, and Schockaert}]{RAGAs}
Shahul ES, Jithin James, Luis~Espinosa Anke, and Steven Schockaert. 2024.
\newblock \href {https://aclanthology.org/2024.eacl-demo.16} {Ragas: Automated evaluation of retrieval augmented generation}.
\newblock In \emph{Proceedings of the 18th Conference of the European Chapter of the Association for Computational Linguistics, {EACL} 2024 - System Demonstrations, St. Julians, Malta, March 17-22, 2024}, pages 150--158. Association for Computational Linguistics.

\bibitem[{Evans(2008)}]{evans2008dual}
Jonathan St. B.~T. Evans. 2008.
\newblock \href {https://doi.org/10.1146/annurev.psych.59.103006.093629} {Dual-processing accounts of reasoning, judgment, and social cognition}.
\newblock \emph{Annual Review of Psychology}, 59:255--278.

\bibitem[{Fabbri et~al.(2021)Fabbri, Kry{\'s}ci{\'n}ski, McCann, Xiong, Socher, and Radev}]{fabbri-etal-2021-summeval}
Alexander~R. Fabbri, Wojciech Kry{\'s}ci{\'n}ski, Bryan McCann, Caiming Xiong, Richard Socher, and Dragomir Radev. 2021.
\newblock \href {https://doi.org/10.1162/tacl_a_00373} {{S}umm{E}val: Re-evaluating summarization evaluation}.
\newblock \emph{Transactions of the Association for Computational Linguistics}, 9:391--409.

\bibitem[{Fabbri et~al.(2019)Fabbri, Li, She, Li, and Radev}]{mutinews}
Alexander~R. Fabbri, Irene Li, Tianwei She, Suyi Li, and Dragomir~R. Radev. 2019.
\newblock \href {https://doi.org/10.18653/V1/P19-1102} {Multi-news: {A} large-scale multi-document summarization dataset and abstractive hierarchical model}.
\newblock In \emph{Proceedings of the 57th Conference of the Association for Computational Linguistics, {ACL} 2019, Florence, Italy, July 28- August 2, 2019, Volume 1: Long Papers}, pages 1074--1084. Association for Computational Linguistics.

\bibitem[{Goyal and Durrett(2021)}]{goyal-durrett-2021-annotating}
Tanya Goyal and Greg Durrett. 2021.
\newblock \href {https://doi.org/10.18653/v1/2021.naacl-main.114} {Annotating and modeling fine-grained factuality in summarization}.
\newblock In \emph{Proceedings of the 2021 Conference of the North American Chapter of the Association for Computational Linguistics: Human Language Technologies}, pages 1449--1462, Online. Association for Computational Linguistics.

\bibitem[{Grattafiori et~al.(2024)Grattafiori, Dubey, Jauhri, Pandey, Kadian, Al-Dahle, Letman, Mathur, Schelten, Vaughan, Yang, Fan, Goyal, Hartshorn, Yang, Mitra, Sravankumar, Korenev, Hinsvark, Rao, Zhang, Rodriguez, Gregerson, Spataru, Roziere, Biron, Tang, Chern, Caucheteux, Nayak, Bi, Marra, McConnell, Keller, Touret, Wu, Wong, Ferrer, Nikolaidis, Allonsius, Song, Pintz, Livshits, Esiobu, Choudhary, Mahajan, Garcia-Olano, Perino, Hupkes, Lakomkin, AlBadawy, Lobanova, Dinan, Smith, Radenovic, Zhang, Synnaeve, Lee, Anderson, Thattai, Nail, Mialon, Pang, Cucurell, Nguyen, Korevaar, Xu, Touvron, Zarov, Ibarra, Kloumann, Misra, Evtimov, Zhang, Lee, Geffert, Vranes, Park, Mahadeokar, Shah, van~der Linde, Billock, Hong, Lee, Fu, Chi, Huang, Liu, Wang, Yu, Bitton, Spisak, Park, Rocca, Johnstun, Saxe, Jia, Alwala, Upasani, Plawiak, Li, Heafield, Stone, El-Arini, Iyer, Malik, Chiu, Bhalla, Rantala-Yeary, Chen, Tan, Jenkins, Martin, Madaan, Malo, Blecher, Landzaat, de~Oliveira, Muzzi, Patel, Tsimpoukelli,
  Schumacher, Liskovich, Denil, Smith, Ramesh, Baines, Srivastava, Koura, Singh, Gupta, Kambadur, Yang, Kota, Alao, Stojnic, Raileanu, Maheswari, Girdhar, Patel, Sauvestre, Polidoro, Sumbaly, Taylor, Silva, Hou, Wang, Hosseini, Chennabasappa, Singh, Bell, Kim, Edunov, Nie, Narang, Raparthy, Shen, Wan, Bhosale, Zhang, Vandenhende, Batra, Whitman, Sootla, Collot, Gururangan, Borodinsky, Herman, Fowler, Sheasha, Georgiou, Scialom, Speckbacher, Mihaylov, Xiao, Karn, Goswami, Gupta, Ramanathan, Kerkez, Gonguet, Do, Vogeti, Albiero, Petrovic, Chu, Xiong, Fu, Meers, Martinet, Wang, Tan, Xia, Xie, Jia, Wang, Goldschlag, Gaur, Babaei, Wen, Song, Zhang, Li, Mao, Coudert, Yan, Chen, Papakipos, Singh, Grizzle, Jain, Kelsey, Shajnfeld, Gangidi, Victoria, Goldstand, Menon, Sharma, Boesenberg, Vaughan, Feinstein, Kallet, Sangani, Yunus, Lupu, Alvarado, Caples, Gu, Ho, Poulton, Ryan, Ramchandani, Franco, Saraf, Chowdhury, Gabriel, Bharambe, Eisenman, Yazdan, James, Maurer, Leonhardi, Huang, Loyd, De~Paola, Paranjape, Liu,
  Wu, Ni, Hancock, Wasti, Spence, Stojkovic, Gamido, Montalvo, Parker, Burton, Mejia, Wang, Kim, Zhou, Hu, Chu, Cai, Tindal, Feichtenhofer, Gao, Civin, Beaty, Kreymer, Li, Wyatt, Adkins, Xu, Testuggine, David, Parikh, Liskovich, Foss, Wang, Le, Holland, Dowling, Jamil, Montgomery, Presani, Hahn, Wood, Brinkman, Arcaute, Dunbar, Smothers, Sun, Kreuk, Tian, Kokkinos, Ozgenel, Caggioni, Kanayet, Seide, Florez, Schwarz, Badeer, Swee, Halpern, Herman, Sizov, Guangyi, Zhang, Lakshminarayanan, Inan, Shojanazeri, Zou, Wang, Zha, Habeeb, Rudolph, Suk, Aspegren, Goldman, Zhan, Damlaj, Molybog, Tufanov, Veliche, Gat, Weissman, Geboski, Kohli, Lam, Asher, Gaya, Marcus, Tang, Chan, Zhen, Reizenstein, Teboul, Zhong, Jin, Yang, Cummings, Carvill, Shepard, McPhie, Torres, Ginsburg, Wang, Wu, U, Saxena, Prasad, Khandelwal, Zand, Matosich, Veeraraghavan, Michelena, Li, Jagadeesh, Huang, Chawla, Lakhotia, Huang, Chen, Garg, A, Silva, Bell, Zhang, Guo, Yu, Moshkovich, Wehrstedt, Khabsa, Avalani, Bhatt, Tsimpoukelli, Mankus,
  Hasson, Lennie, Reso, Groshev, Naumov, Lathi, Keneally, Liu, Seltzer, Valko, Restrepo, Patel, Vyatskov, Samvelyan, Clark, Macey, Wang, Hermoso, Metanat, Rastegari, Bansal, Santhanam, Parks, White, Bawa, Singhal, Egebo, Usunier, Mehta, Laptev, Dong, Cheng, Chernoguz, Hart, Salpekar, Kalinli, Kent, Parekh, Saab, Balaji, Rittner, Bontrager, Roux, Dollar, Zvyagina, Ratanchandani, Yuvraj, Liang, Alao, Rodriguez, Ayub, Murthy, Nayani, Mitra, Li, Hogan, Battey, Wang, Howland, Batra, Kim, Bithal, Zha, Deng, Parikh, Nezhurina, Khanna, Prasad, Yang, Raje, Guzm{\'a}n, Tiwary, Chen, Sridhar, Askham, Laskin, Kundu, Xiang, Wang, Li, Chen, Tirumala, Papi, Li, Peng, Shnayder, Panyukov, Dinkov, Lyu, Wu, Wu, Fekete, Fu, Tang, Wang, Wu, Li, Xie, Wu, Li, Weng, Zhu, Nigade, Qian, Qiao, Rait, Sukhbaatar, Liu, Cai, Wang, Song, Li, Guo, Chen, Sun, Wang, Ni, Zou, Liu, Lin, Xie, Zhang, Li, Ma, Auli, Cisse, Conneau, Kiela, Li, Chintala, Weston, Ayan, Batra, LeCun, Kavukcuoglu, Joulin, Synnaeve, Rambow, Clune, Lin, Dupoux,
  Subramanian, Honnibal, Shuster, Weston, Bordes, Weston, Lewis, Rockt{\"a}schel, Sun, Shazeer, Zhang, Devlin, Kiela, Lewis, and Zettlemoyer}]{grattafiori2024llama}
Aaron Grattafiori, Abhimanyu Dubey, Abhinav Jauhri, Abhinav Pandey, Abhishek Kadian, Ahmad Al-Dahle, Aiesha Letman, Akhil Mathur, Alan Schelten, Alex Vaughan, Amy Yang, Angela Fan, Anirudh Goyal, Anthony Hartshorn, Aobo Yang, Archi Mitra, Archie Sravankumar, Artem Korenev, Arthur Hinsvark, and 515 others. 2024.
\newblock \href {https://arxiv.org/abs/2407.21783} {The llama 3 herd of models}.
\newblock \emph{Preprint}, arXiv:2407.21783.

\bibitem[{Guo et~al.(2022)Guo, Clavel, Kamal~Eddine, and Vazirgiannis}]{guo-etal-2022-questioning}
Yanzhu Guo, Chlo{\'e} Clavel, Moussa Kamal~Eddine, and Michalis Vazirgiannis. 2022.
\newblock \href {https://doi.org/10.18653/v1/2022.emnlp-main.386} {Questioning the validity of summarization datasets and improving their factual consistency}.
\newblock In \emph{Proceedings of the 2022 Conference on Empirical Methods in Natural Language Processing}, pages 5716--5727, Abu Dhabi, United Arab Emirates. Association for Computational Linguistics.

\bibitem[{Honovich et~al.(2022)Honovich, Aharoni, Herzig, Taitelbaum, Kukliansy, Cohen, Scialom, Szpektor, Hassidim, and Matias}]{honovich-etal-2022-true-evaluating}
Or~Honovich, Roee Aharoni, Jonathan Herzig, Hagai Taitelbaum, Doron Kukliansy, Vered Cohen, Thomas Scialom, Idan Szpektor, Avinatan Hassidim, and Yossi Matias. 2022.
\newblock \href {https://doi.org/10.18653/v1/2022.naacl-main.287} {{TRUE}: Re-evaluating factual consistency evaluation}.
\newblock In \emph{Proceedings of the 2022 Conference of the North American Chapter of the Association for Computational Linguistics: Human Language Technologies}, pages 3905--3920, Seattle, United States. Association for Computational Linguistics.

\bibitem[{Huang et~al.(2020)Huang, Cui, Yang, Bao, Wang, Xie, and Zhang}]{polytope}
Dandan Huang, Leyang Cui, Sen Yang, Guangsheng Bao, Kun Wang, Jun Xie, and Yue Zhang. 2020.
\newblock \href {https://doi.org/10.18653/V1/2020.EMNLP-MAIN.33} {What have we achieved on text summarization?}
\newblock In \emph{Proceedings of the 2020 Conference on Empirical Methods in Natural Language Processing, {EMNLP} 2020, Online, November 16-20, 2020}, pages 446--469. Association for Computational Linguistics.

\bibitem[{Huang et~al.(2021)Huang, Cao, Parulian, Ji, and Wang}]{govreport}
Luyang Huang, Shuyang Cao, Nikolaus~Nova Parulian, Heng Ji, and Lu~Wang. 2021.
\newblock \href {https://doi.org/10.18653/V1/2021.NAACL-MAIN.112} {Efficient attentions for long document summarization}.
\newblock In \emph{Proceedings of the 2021 Conference of the North American Chapter of the Association for Computational Linguistics: Human Language Technologies, {NAACL-HLT} 2021, Online, June 6-11, 2021}, pages 1419--1436. Association for Computational Linguistics.

\bibitem[{Jacovi et~al.(2024)Jacovi, Bitton, Bohnet, Herzig, Honovich, Tseng, Collins, Aharoni, and Geva}]{reveal2024}
Alon Jacovi, Yonatan Bitton, Bernd Bohnet, Jonathan Herzig, Or~Honovich, Michael Tseng, Michael Collins, Roee Aharoni, and Mor Geva. 2024.
\newblock \href {https://doi.org/10.18653/v1/2024.acl-long.254} {A chain-of-thought is as strong as its weakest link: A benchmark for verifiers of reasoning chains}.
\newblock In \emph{Proceedings of the 62nd Annual Meeting of the Association for Computational Linguistics (Volume 1: Long Papers)}, pages 4615--4634, Bangkok, Thailand. Association for Computational Linguistics.

\bibitem[{Kahneman(1973)}]{kahneman1973attention}
Daniel Kahneman. 1973.
\newblock \emph{Attention and Effort}.
\newblock Prentice-Hall, Englewood Cliffs, NJ.

\bibitem[{Kamoi et~al.(2023)Kamoi, Goyal, Rodriguez, and Durrett}]{kamoi-etal-2023-wice}
Ryo Kamoi, Tanya Goyal, Juan~Diego Rodriguez, and Greg Durrett. 2023.
\newblock \href {https://doi.org/10.18653/v1/2023.emnlp-main.470} {{W}i{CE}: Real-world entailment for claims in {W}ikipedia}.
\newblock In \emph{Proceedings of the 2023 Conference on Empirical Methods in Natural Language Processing}. Association for Computational Linguistics.

\bibitem[{Koh et~al.(2022)Koh, Ju, Zhang, Liu, and Pan}]{Koh}
Huan~Yee Koh, Jiaxin Ju, He~Zhang, Ming Liu, and Shirui Pan. 2022.
\newblock \href {https://doi.org/10.18653/V1/2022.EMNLP-MAIN.172} {How far are we from robust long abstractive summarization?}
\newblock In \emph{Proceedings of the 2022 Conference on Empirical Methods in Natural Language Processing, {EMNLP} 2022, Abu Dhabi, United Arab Emirates, December 7-11, 2022}, pages 2682--2698. Association for Computational Linguistics.

\bibitem[{Kryscinski et~al.(2020)Kryscinski, McCann, Xiong, and Socher}]{kryscinski-etal-2020-factcc}
Wojciech Kryscinski, Bryan McCann, Caiming Xiong, and Richard Socher. 2020.
\newblock \href {https://doi.org/10.18653/v1/2020.emnlp-main.750} {Evaluating the factual consistency of abstractive text summarization}.
\newblock In \emph{Proceedings of the 2020 Conference on Empirical Methods in Natural Language Processing (EMNLP)}, pages 9332--9346, Online. Association for Computational Linguistics.

\bibitem[{Laban et~al.(2022)Laban, Schnabel, Bennett, and Hearst}]{summac_2022}
Philippe Laban, Tobias Schnabel, Paul~N. Bennett, and Marti~A. Hearst. 2022.
\newblock \href {https://doi.org/10.1162/TACL\_A\_00453} {Summac: Re-visiting nli-based models for inconsistency detection in summarization}.
\newblock \emph{Trans. Assoc. Comput. Linguistics}, 10:163--177.

\bibitem[{Lei et~al.(2025)Lei, Li, Li, Hu, Xu, Archer, Wang, Ching, and Deng}]{factcg}
Deren Lei, Yaxi Li, Siyao Li, Mengya Hu, Rui Xu, Ken Archer, Mingyu Wang, Emily Ching, and Alex Deng. 2025.
\newblock \href {https://doi.org/10.18653/V1/2025.NAACL-LONG.258} {Factcg: Enhancing fact checkers with graph-based multi-hop data}.
\newblock In \emph{Proceedings of the 2025 Conference of the Nations of the Americas Chapter of the Association for Computational Linguistics: Human Language Technologies, {NAACL} 2025 - Volume 1: Long Papers, Albuquerque, New Mexico, USA, April 29 - May 4, 2025}, pages 5002--5020. Association for Computational Linguistics.

\bibitem[{Li et~al.(2023)Li, Cheng, Zhao, Nie, and Wen}]{li-etal-2023-halueval}
Junyi Li, Xiaoxue Cheng, Wayne~Xin Zhao, Jian-Yun Nie, and Ji-Rong Wen. 2023.
\newblock \href {https://doi.org/10.18653/v1/2023.emnlp-main.397} {{H}alu{E}val: A large-scale hallucination evaluation benchmark for large language models}.
\newblock In \emph{Proceedings of the 2023 Conference on Empirical Methods in Natural Language Processing}. Association for Computational Linguistics.

\bibitem[{Liu et~al.(2023)Liu, Zhang, and Liang}]{claimverify2023}
Nelson Liu, Tianyi Zhang, and Percy Liang. 2023.
\newblock \href {https://doi.org/10.18653/v1/2023.findings-emnlp.467} {Evaluating verifiability in generative search engines}.
\newblock In \emph{Findings of the Association for Computational Linguistics: EMNLP 2023}, pages 7001--7025, Singapore. Association for Computational Linguistics.

\bibitem[{Malaviya et~al.(2024)Malaviya, Lee, Chen, Siegel, Yatskar, and Roth}]{malaviya-etal-2024-expertqa}
Chaitanya Malaviya, Subin Lee, Sihao Chen, Rachel Siegel, Mark Yatskar, and Dan Roth. 2024.
\newblock \href {https://aclanthology.org/2024.naacl-long.167/} {{E}xpert{QA}: Expert-curated questions and attributed answers}.
\newblock In \emph{Proceedings of the 2024 Conference of the North American Chapter of the Association for Computational Linguistics: Human Language Technologies}. Association for Computational Linguistics.

\bibitem[{Min et~al.(2025)Min, Lee, Ban, Deng, Kim, Yun, Su, Cai, and Song}]{Multi_dimensional}
Hyangsuk Min, Yuho Lee, Minjeong Ban, Jiaqi Deng, Nicole~Hee{-}Yeon Kim, Taewon Yun, Hang Su, Jason Cai, and Hwanjun Song. 2025.
\newblock \href {https://doi.org/10.18653/V1/2025.ACL-LONG.702} {Towards multi-dimensional evaluation of {LLM} summarization across domains and languages}.
\newblock In \emph{Proceedings of the 63rd Annual Meeting of the Association for Computational Linguistics (Volume 1: Long Papers), {ACL} 2025, Vienna, Austria, July 27 - August 1, 2025}, pages 14417--14450. Association for Computational Linguistics.

\bibitem[{Min et~al.(2023)Min, Krishna, Lyu, Lewis, Yih, Koh, Iyyer, Zettlemoyer, and Hajishirzi}]{factscore}
Sewon Min, Kalpesh Krishna, Xinxi Lyu, Mike Lewis, Wen{-}tau Yih, Pang~Wei Koh, Mohit Iyyer, Luke Zettlemoyer, and Hannaneh Hajishirzi. 2023.
\newblock \href {https://doi.org/10.18653/V1/2023.EMNLP-MAIN.741} {Factscore: Fine-grained atomic evaluation of factual precision in long form text generation}.
\newblock In \emph{Proceedings of the 2023 Conference on Empirical Methods in Natural Language Processing, {EMNLP} 2023, Singapore, December 6-10, 2023}, pages 12076--12100. Association for Computational Linguistics.

\bibitem[{Mishra et~al.(2024)Mishra, Asai, Balachandran, Wang, Neubig, Tsvetkov, and Hajishirzi}]{mishra2024finegrained}
Abhika Mishra, Akari Asai, Vidhisha Balachandran, Yizhong Wang, Graham Neubig, Yulia Tsvetkov, and Hannaneh Hajishirzi. 2024.
\newblock \href {https://openreview.net/forum?id=dJMTn3QOWO} {Fine-grained hallucination detection and editing for language models}.
\newblock In \emph{First Conference on Language Modeling}.

\bibitem[{Niu et~al.(2024)Niu, Wu, Zhu, Xu, Shum, Zhong, Song, and Zhang}]{niu-etal-2024-ragtruth}
Cheng Niu, Yuanhao Wu, Juno Zhu, Siliang Xu, KaShun Shum, Randy Zhong, Juntong Song, and Tong Zhang. 2024.
\newblock \href {https://doi.org/10.18653/v1/2024.acl-long.585} {{RAGT}ruth: A hallucination corpus for developing trustworthy retrieval-augmented language models}.
\newblock In \emph{Proceedings of the 62nd Annual Meeting of the Association for Computational Linguistics (Volume 1: Long Papers)}, pages 10862--10878, Bangkok, Thailand. Association for Computational Linguistics.

\bibitem[{Olmo et~al.(2026)Olmo, :, Ettinger, Bertsch, Kuehl, Graham, Heineman, Groeneveld, Brahman, Timbers, Ivison, Morrison, Poznanski, Lo, Soldaini, Jordan, Chen, Noukhovitch, Lambert, Walsh, Dasigi, Berry, Malik, Shah, Geng, Arora, Gupta, Anderson, Xiao, Murray, Romero, Graf, Asai, Bhagia, Wettig, Liu, Rangapur, Anastasiades, Huang, Schwenk, Trivedi, Magnusson, Lochner, Liu, Miranda, Sap, Morgan, Schmitz, Guerquin, Wilson, Huff, Bras, Xin, Shao, Skjonsberg, Shen, Li, Wilde, Pyatkin, Merrill, Chang, Gu, Zeng, Sabharwal, Zettlemoyer, Koh, Farhadi, Smith, and Hajishirzi}]{olmo2025olmo3}
Team Olmo, :, Allyson Ettinger, Amanda Bertsch, Bailey Kuehl, David Graham, David Heineman, Dirk Groeneveld, Faeze Brahman, Finbarr Timbers, Hamish Ivison, Jacob Morrison, Jake Poznanski, Kyle Lo, Luca Soldaini, Matt Jordan, Mayee Chen, Michael Noukhovitch, Nathan Lambert, and 50 others. 2026.
\newblock \href {https://arxiv.org/abs/2512.13961} {Olmo 3}.
\newblock \emph{Preprint}, arXiv:2512.13961.

\bibitem[{Pagnoni et~al.(2021{\natexlab{a}})Pagnoni, Balachandran, and Tsvetkov}]{pagnoni-etal-2021-understanding}
Artidoro Pagnoni, Vidhisha Balachandran, and Yulia Tsvetkov. 2021{\natexlab{a}}.
\newblock \href {https://doi.org/10.18653/v1/2021.naacl-main.383} {Understanding factuality in abstractive summarization with {FRANK}: A benchmark for factuality metrics}.
\newblock In \emph{Proceedings of the 2021 Conference of the North American Chapter of the Association for Computational Linguistics: Human Language Technologies}, pages 4812--4829, Online. Association for Computational Linguistics.

\bibitem[{Pagnoni et~al.(2021{\natexlab{b}})Pagnoni, Balachandran, and Tsvetkov}]{frank}
Artidoro Pagnoni, Vidhisha Balachandran, and Yulia Tsvetkov. 2021{\natexlab{b}}.
\newblock \href {https://doi.org/10.18653/V1/2021.NAACL-MAIN.383} {Understanding factuality in abstractive summarization with {FRANK:} {A} benchmark for factuality metrics}.
\newblock In \emph{Proceedings of the 2021 Conference of the North American Chapter of the Association for Computational Linguistics: Human Language Technologies, {NAACL-HLT} 2021, Online, June 6-11, 2021}, pages 4812--4829. Association for Computational Linguistics.

\bibitem[{Song et~al.(2024)Song, Kim, and Iyyer}]{VeriScore}
Yixiao Song, Yekyung Kim, and Mohit Iyyer. 2024.
\newblock \href {https://doi.org/10.18653/V1/2024.FINDINGS-EMNLP.552} {Veriscore: Evaluating the factuality of verifiable claims in long-form text generation}.
\newblock In \emph{Findings of the Association for Computational Linguistics: {EMNLP} 2024, Miami, Florida, USA, November 12-16, 2024}, Findings of {ACL}, pages 9447--9474. Association for Computational Linguistics.

\bibitem[{Subbiah et~al.(2024)Subbiah, Ladhak, Mishra, Adams, Chilton, and McKeown}]{STORYSUMM}
Melanie Subbiah, Faisal Ladhak, Akankshya Mishra, Griffin Adams, Lydia~B. Chilton, and Kathleen~R. McKeown. 2024.
\newblock \href {https://doi.org/10.18653/V1/2024.EMNLP-MAIN.557} {{STORYSUMM:} evaluating faithfulness in story summarization}.
\newblock In \emph{Proceedings of the 2024 Conference on Empirical Methods in Natural Language Processing, {EMNLP} 2024, Miami, FL, USA, November 12-16, 2024}, pages 9988--10005. Association for Computational Linguistics.

\bibitem[{Sweller(1988)}]{sweller1988cognitive}
John Sweller. 1988.
\newblock \href {https://doi.org/10.1207/s15516709cog1202_4} {Cognitive load during problem solving: Effects on learning}.
\newblock \emph{Cognitive Science}, 12(2):257--285.

\bibitem[{Tang et~al.(2023)Tang, Goyal, Fabbri, Laban, Xu, Yavuz, Kryscinski, Rousseau, and Durrett}]{tang-etal-2023-understanding}
Liyan Tang, Tanya Goyal, Alex Fabbri, Philippe Laban, Jiacheng Xu, Semih Yavuz, Wojciech Kryscinski, Justin Rousseau, and Greg Durrett. 2023.
\newblock \href {https://doi.org/10.18653/v1/2023.acl-long.650} {Understanding factual errors in summarization: Errors, summarizers, datasets, error detectors}.
\newblock In \emph{Proceedings of the 61st Annual Meeting of the Association for Computational Linguistics (Volume 1: Long Papers)}, pages 11626--11644, Toronto, Canada. Association for Computational Linguistics.

\bibitem[{Tang et~al.(2024{\natexlab{a}})Tang, Laban, and Durrett}]{llmaggrefact_dataset}
Liyan Tang, Philippe Laban, and Greg Durrett. 2024{\natexlab{a}}.
\newblock \href {https://doi.org/10.18653/v1/2024.emnlp-main.499} {{M}ini{C}heck: Efficient fact-checking of {LLM}s on grounding documents}.
\newblock In \emph{Proceedings of the 2024 Conference on Empirical Methods in Natural Language Processing}, pages 8818--8847, Miami, Florida, USA. Association for Computational Linguistics.

\bibitem[{Tang et~al.(2024{\natexlab{b}})Tang, Laban, and Durrett}]{tang-etal-2024-minicheck}
Liyan Tang, Philippe Laban, and Greg Durrett. 2024{\natexlab{b}}.
\newblock \href {https://doi.org/10.18653/v1/2024.emnlp-main.499} {{M}ini{C}heck: Efficient fact-checking of {LLM}s on grounding documents}.
\newblock In \emph{Proceedings of the 2024 Conference on Empirical Methods in Natural Language Processing}, pages 8818--8847, Miami, Florida, USA. Association for Computational Linguistics.

\bibitem[{Tang et~al.(2024{\natexlab{c}})Tang, Shalyminov, Wong, Burnsky, Vincent, Yang, Singh, Feng, Song, Su, Sun, Zhang, Mansour, and McKeown}]{tang-etal-2024-tofueval}
Liyan Tang, Igor Shalyminov, Amy Wing-mei Wong, Jon Burnsky, Jake~W. Vincent, Yu'an Yang, Siffi Singh, Song Feng, Hwanjun Song, Hang Su, Lijia Sun, Yi~Zhang, Saab Mansour, and Kathleen McKeown. 2024{\natexlab{c}}.
\newblock \href {https://doi.org/10.18653/v1/2024.naacl-long.251} {{T}ofu{E}val: Evaluating hallucinations of {LLM}s on topic-focused dialogue summarization}.
\newblock In \emph{Proceedings of the 2024 Conference of the North American Chapter of the Association for Computational Linguistics: Human Language Technologies}. Association for Computational Linguistics.

\bibitem[{Tang et~al.(2022)Tang, Fabbri, Li, Mao, Adams, Wang, Celikyilmaz, Mehdad, and Radev}]{Protocols}
Xiangru Tang, Alexander~R. Fabbri, Haoran Li, Ziming Mao, Griffin Adams, Borui Wang, Asli Celikyilmaz, Yashar Mehdad, and Dragomir~R. Radev. 2022.
\newblock \href {https://doi.org/10.18653/V1/2022.NAACL-MAIN.417} {Investigating crowdsourcing protocols for evaluating the factual consistency of summaries}.
\newblock In \emph{Proceedings of the 2022 Conference of the North American Chapter of the Association for Computational Linguistics: Human Language Technologies, {NAACL} 2022, Seattle, WA, United States, July 10-15, 2022}, pages 5680--5692. Association for Computational Linguistics.

\bibitem[{Tversky and Kahneman(1974)}]{tversky1974judgment}
Amos Tversky and Daniel Kahneman. 1974.
\newblock \href {https://doi.org/10.1126/science.185.4157.1124} {Judgment under uncertainty: Heuristics and biases}.
\newblock \emph{Science}, 185(4157):1124--1131.

\bibitem[{Wan et~al.(2025)Wan, Vig, Bansal, and Joty}]{Positional_Bias}
David Wan, Jesse Vig, Mohit Bansal, and Shafiq Joty. 2025.
\newblock \href {https://doi.org/10.18653/V1/2025.NAACL-LONG.442} {On positional bias of faithfulness for long-form summarization}.
\newblock In \emph{Proceedings of the 2025 Conference of the Nations of the Americas Chapter of the Association for Computational Linguistics: Human Language Technologies, {NAACL} 2025 - Volume 1: Long Papers, Albuquerque, New Mexico, USA, April 29 - May 4, 2025}, pages 8791--8810. Association for Computational Linguistics.

\bibitem[{Wang et~al.(2020)Wang, Cho, and Lewis}]{wang-etal-2020-qags}
Alex Wang, Kyunghyun Cho, and Mike Lewis. 2020.
\newblock \href {https://doi.org/10.18653/v1/2020.acl-main.450} {Asking and answering questions to evaluate the factual consistency of summaries}.
\newblock In \emph{Proceedings of the 58th Annual Meeting of the Association for Computational Linguistics}, pages 5008--5020, Online. Association for Computational Linguistics.

\bibitem[{Wang et~al.(2026)Wang, Prei{\ss}, Bugue{\~{n}}o, Hoffbauer, Ghajar, Buz, and de~Melo}]{ReFACT}
Yindong Wang, Martin Prei{\ss}, Margarita Bugue{\~{n}}o, Jan~Vincent Hoffbauer, Abdullatif Ghajar, Tolga Buz, and Gerard de~Melo. 2026.
\newblock \href {https://doi.org/10.18653/V1/2026.EACL-LONG.381} {Refact: {A} benchmark for scientific confabulation detection with positional error annotations}.
\newblock In \emph{Proceedings of the 19th Conference of the European Chapter of the Association for Computational Linguistics, {EACL} 2026 - Volume 1: Long Papers, Rabat, Morocco, March 24-29, 2026}, pages 8174--8187. Association for Computational Linguistics.

\bibitem[{Wang et~al.(2024)Wang, Gangi~Reddy, Mujahid, Arora, Rubashevskii, Geng, Mohammed~Afzal, Pan, Borenstein, Pillai, Augenstein, Gurevych, and Nakov}]{factcheckgpt2023}
Yuxia Wang, Revanth Gangi~Reddy, Zain~Muhammad Mujahid, Arnav Arora, Aleksandr Rubashevskii, Jiahui Geng, Osama Mohammed~Afzal, Liangming Pan, Nadav Borenstein, Aditya Pillai, Isabelle Augenstein, Iryna Gurevych, and Preslav Nakov. 2024.
\newblock \href {https://doi.org/10.18653/v1/2024.findings-emnlp.830} {Factcheck-bench: Fine-grained evaluation benchmark for automatic fact-checkers}.
\newblock In \emph{Findings of the Association for Computational Linguistics: EMNLP 2024}, pages 14199--14230, Miami, Florida, USA. Association for Computational Linguistics.

\bibitem[{Wei et~al.(2024)Wei, Yang, Song, Lu, Hu, Huang, Tran, Peng, Liu, Huang, Du, and Le}]{Long_form}
Jerry Wei, Chengrun Yang, Xinying Song, Yifeng Lu, Nathan Hu, Jie Huang, Dustin Tran, Daiyi Peng, Ruibo Liu, Da~Huang, Cosmo Du, and Quoc~V. Le. 2024.
\newblock \href {http://papers.nips.cc/paper\_files/paper/2024/hash/937ae0e83eb08d2cb8627fe1def8c751-Abstract-Conference.html} {Long-form factuality in large language models}.
\newblock In \emph{Advances in Neural Information Processing Systems 38: Annual Conference on Neural Information Processing Systems 2024, NeurIPS 2024, Vancouver, BC, Canada, December 10 - 15, 2024}.

\bibitem[{Yang et~al.(2025)Yang, Li, Yang, Zhang, Hui, Zheng, Yu, Gao, Huang, Lv, Zheng, Liu, Zhou, Huang, Hu, Ge, Wei, Lin, Tang, Yang, Tu, Zhang, Yang, Yang, Zhou, Zhou, Lin, Dang, Bao, Yang, Yu, Deng, Li, Xue, Li, Zhang, Wang, Zhu, Men, Gao, Liu, Luo, Li, Tang, Yin, Ren, Wang, Zhang, Ren, Fan, Su, Zhang, Zhang, Wan, Liu, Wang, Cui, Zhang, Zhou, and Qiu}]{qwen3}
An~Yang, Anfeng Li, Baosong Yang, Beichen Zhang, Binyuan Hui, Bo~Zheng, Bowen Yu, Chang Gao, Chengen Huang, Chenxu Lv, Chujie Zheng, Dayiheng Liu, Fan Zhou, Fei Huang, Feng Hu, Hao Ge, Haoran Wei, Huan Lin, Jialong Tang, and 41 others. 2025.
\newblock \href {https://arxiv.org/abs/2505.09388} {Qwen3 technical report}.
\newblock \emph{Preprint}, arXiv:2505.09388.

\bibitem[{Yin et~al.(2026)Yin, Li, Liu, Wang, Song, Ma, Liu, Indurthi, Deng, He, Zhou, and Wang}]{ContextCheck}
Yueqin Yin, Yaxi Li, Xin Liu, Xun Wang, Kaiqiang Song, Simin Ma, Shujian Liu, Sathish~Reddy Indurthi, Haoyun Deng, Pengcheng He, Mingyuan Zhou, and Song Wang. 2026.
\newblock \href {https://aclanthology.org/2026.findings-acl.658/} {Contextcheck: Sentence-level faithfulness verification with context-aware disambiguation}.
\newblock In \emph{Findings of the Association for Computational Linguistics, {ACL} 2026, San Diego, California, United States, July 2-7, 2026}, pages 13419--13468. Association for Computational Linguistics.

\bibitem[{Zha et~al.(2023)Zha, Yang, Li, and Hu}]{alignscore}
Yuheng Zha, Yichi Yang, Ruichen Li, and Zhiting Hu. 2023.
\newblock \href {https://doi.org/10.18653/V1/2023.ACL-LONG.634} {Alignscore: Evaluating factual consistency with {A} unified alignment function}.
\newblock In \emph{Proceedings of the 61st Annual Meeting of the Association for Computational Linguistics (Volume 1: Long Papers), {ACL} 2023, Toronto, Canada, July 9-14, 2023}, pages 11328--11348. Association for Computational Linguistics.

\bibitem[{Zhang et~al.(2024)Zhang, Xu, and Perez-Beltrachini}]{zhang-etal-2024-fine}
Huajian Zhang, Yumo Xu, and Laura Perez-Beltrachini. 2024.
\newblock \href {https://doi.org/10.18653/v1/2024.eacl-long.102} {Fine-grained natural language inference based faithfulness evaluation for diverse summarisation tasks}.
\newblock In \emph{Proceedings of the 18th Conference of the European Chapter of the Association for Computational Linguistics (Volume 1: Long Papers)}, pages 1701--1722, St. Julian{'}s, Malta. Association for Computational Linguistics.

\bibitem[{Zhang et~al.(2025)Zhang, Balalau, and Manolescu}]{DiscInfer}
Kun Zhang, Oana Balalau, and Ioana Manolescu. 2025.
\newblock \href {https://aclanthology.org/2025.findings-acl.46/} {Structured discourse representation for factual consistency verification}.
\newblock In \emph{Findings of the Association for Computational Linguistics, {ACL} 2025, Vienna, Austria, July 27 - August 1, 2025}, Findings of {ACL}, pages 820--838. Association for Computational Linguistics.

\bibitem[{Zhong et~al.(2021)Zhong, Yin, Yu, Zaidi, Mutuma, Jha, Awadallah, Celikyilmaz, Liu, Qiu, and Radev}]{qmsum}
Ming Zhong, Da~Yin, Tao Yu, Ahmad Zaidi, Mutethia Mutuma, Rahul Jha, Ahmed~Hassan Awadallah, Asli Celikyilmaz, Yang Liu, Xipeng Qiu, and Dragomir~R. Radev. 2021.
\newblock \href {https://doi.org/10.18653/V1/2021.NAACL-MAIN.472} {Qmsum: {A} new benchmark for query-based multi-domain meeting summarization}.
\newblock In \emph{Proceedings of the 2021 Conference of the North American Chapter of the Association for Computational Linguistics: Human Language Technologies, {NAACL-HLT} 2021, Online, June 6-11, 2021}, pages 5905--5921. Association for Computational Linguistics.

\bibitem[{Zhong and Litman(2025{\natexlab{a}})}]{Discourse}
Yang Zhong and Diane~J. Litman. 2025{\natexlab{a}}.
\newblock \href {https://doi.org/10.18653/V1/2025.NAACL-LONG.103} {Discourse-driven evaluation: Unveiling factual inconsistency in long document summarization}.
\newblock In \emph{Proceedings of the 2025 Conference of the Nations of the Americas Chapter of the Association for Computational Linguistics: Human Language Technologies, {NAACL} 2025 - Volume 1: Long Papers, Albuquerque, New Mexico, USA, April 29 - May 4, 2025}, pages 2050--2073. Association for Computational Linguistics.

\bibitem[{Zhong and Litman(2025{\natexlab{b}})}]{Tale}
Yang Zhong and Diane~J. Litman. 2025{\natexlab{b}}.
\newblock \href {https://aclanthology.org/2025.findings-acl.648/} {A tale of evaluating factual consistency: Case study on long document summarization evaluation}.
\newblock In \emph{Findings of the Association for Computational Linguistics, {ACL} 2025, Vienna, Austria, July 27 - August 1, 2025}, Findings of {ACL}, pages 12511--12532. Association for Computational Linguistics.

\end{thebibliography}
